\definecolor{Gray}{gray}{0.9}
\theoremstyle{plain}
\newtheorem{theorem}{Theorem}[section]
\newtheorem{proposition}[theorem]{Proposition}
\newtheorem{lemma}[theorem]{Lemma}
\theoremstyle{definition}
\theoremstyle{remark}
\def\eqref#1{\ref{#1}}
\def\1{\bm{1}}
\def\eps{{\epsilon}}
\newcommand{\rad}{\mathrm{Rad}}
\newcommand{\DCal}{\mathscr{D}}
\newcommand{\ECal}{\mathscr{E}}
\newcommand{\FCal}{\mathscr{F}}
\newcommand{\GCal}{\mathscr{G}}
\newcommand{\HCal}{\mathscr{H}}
\newcommand{\QCal}{\mathscr{Q}}
\newcommand{\SCal}{\mathscr{S}}
\newcommand{\UCal}{\mathscr{U}}
\newcommand{\R}{\mathbb{R}}
\DeclareMathAlphabet{\mathsfit}{\encodingdefault}{\sfdefault}{m}{sl}
\SetMathAlphabet{\mathsfit}{bold}{\encodingdefault}{\sfdefault}{bx}{n}
\newcommand{\best}[1]{\textbf{#1}}
\newcommand{\newtext}[1]{{#1}}
\newcommand{\embd}{\textsf{\small EmbedDistill}}
\title{\vspace{-2mm}EmbedDistill: A Geometric Knowledge Distillation\\
for Information Retrieval}
\author{Seungyeon Kim, Ankit Singh Rawat, Manzil Zaheer,\\Sadeep Jayasumana, Veeranjaneyulu Sadhanala, Wittawat Jitkrittum, \\Aditya Krishna Menon, Rob Fergus, Sanjiv Kumar}
\affil{Google LLC, USA \\
{\small \texttt{\{seungyeonk,ankitsrawat,manzilzaheer\}@google.com} \\
 \texttt{\{sadeep,veerus,wittawat,adityakmenon,robfergus,sanjivk\}@google.com}\vspace{-5mm}}
}
\date{}
\begin{document}

\maketitle

\begin{abstract}
Large neural models (such as Transformers) achieve state-of-the-art performance for information retrieval (IR). In this paper, we aim to improve distillation methods that pave the way for the resource-efficient deployment of such models in practice. Inspired by our theoretical analysis of the teacher-student generalization gap for IR models, we propose a novel distillation approach that leverages the relative geometry among queries and documents learned by the large teacher model. Unlike existing teacher score-based distillation methods, our proposed approach employs embedding matching tasks to provide a stronger signal to align the representations of the teacher and student models. In addition, it utilizes query generation to explore the data manifold to reduce the discrepancies between the student and the teacher where training data is sparse. Furthermore, our analysis also motivates novel asymmetric architectures for student models which realizes better embedding alignment without increasing online inference cost. On standard benchmarks like MSMARCO, we show that our approach successfully distills from both dual-encoder (DE) and cross-encoder (CE) teacher models to 1/10th size asymmetric students that can retain 95-97\% of the teacher performance.
\end{abstract}

\section{Introduction}
\label{sec:intro}

Neural models for information retrieval (IR) are increasingly used to 
model the true ranking function in various applications, including web search~\citep{Mitra:2018Now}, recommendation~\citep{Zhang:2019Survey}, and question-answering (QA)~\citep{Chen:2017Reading}. Notably, the recent success of Transformers~\citep{Vaswani:2017}-based pre-trained language models~\citep{Devlin:2019, Liu:2019Roberta, Raffel:2020T5} on a wide range of natural language understanding %(NLU)
tasks has also prompted their utilization in IR to capture query-document relevance~\citep[see, e.g.,][]{Dai:2019DeepCT, MacAvaney:CEDR, Nogueira:2019, Lee:2019, Karpukhin:2020}.

A typical IR system comprises two stages: 
(1) A \emph{retriever} 
first selects a small subset of potentially relevant candidate documents (out of a large collection) for a given query; and (2) A \emph{re-ranker} then identifies a precise ranking among
the candidates provided by the retriever. \emph{Dual-encoder} (DE) models 
are the de-facto architecture for retrievers~\citep{Lee:2019, Karpukhin:2020}. Such models independently embed queries and documents into a common space, and capture their relevance by simple operations on these embeddings such as the inner product. This enables offline creation of a document index and supports fast retrieval during inference via efficient maximum inner product search implementations~\citep{Guo:2020ScaNN, Johnson:2021FAISS}\newtext{, with \emph{online} query embedding generation primarily dictating the inference latency.} \emph{Cross-encoder} (CE) models, on the other hand, are preferred as re-rankers, owing to their excellent performance~\citep{Nogueira:2019, Dai:2019, Yilmaz:2019}. A CE model jointly encodes a query-document pair while enabling early interaction among query and document features. Employing a CE model for retrieval is often infeasible, as it would require processing a given query with \emph{every} document in the collection at inference time. In fact, even in the re-ranking stage, the inference cost of CE models is high enough~\citep{Khattab:2020} to warrant exploration of efficient alternatives~\citep{Hofstatter:2020, Khattab:2020, Menon:2022DE}. {Across both architectures, scaling to larger models brings improved performance at increased computational cost~\citep{ni-etal-2022-large,neelakantan2022text}.}

\emph{Knowledge distillation}~\citep{Bucilua:2006,Hinton:2015} provides a general strategy to address the prohibitive inference cost associated with high-quality large neural models. In the IR literature, most existing distillation methods only rely on the teacher's query-document relevance scores~\citep[see, e.g.,][]{Lu:2020TwinBERT, Hofstatter:2020, Chen:2021KD, Ren:2021RocketV2, Santhanam:2021} 
or their proxies~\citep{Izacard:2021Distilling}. 
However, given that neural IR models are inherently embedding-based, it is natural to ask:
\begin{center}
\emph{Is it useful to go beyond matching of the teacher and student models' \emph{scores},\\and directly aim to align their \emph{embedding spaces}?}
\end{center}

With this in mind, we propose a novel distillation method for IR models that utilizes an \emph{embedding matching} task to train 
student models. The proposed method is inspired by our rigorous treatment of the generalization gap between the teacher and student models in IR settings. Our theoretical analysis of the \textit{teacher-student generalization gap} further suggests novel design choices involving \textit{asymmetric configurations} for student DE models, intending to further reduce the gap by better aligning teacher and student embedding spaces. Notably, our proposed distillation method supports \emph{cross-architecture distillation} and improves upon existing (score-based) distillation methods for both retriever and re-ranker models. When distilling a large teacher DE model into a smaller student DE model, for a given query (document), one can minimize the distance between the query (document) embeddings of the teacher and student (after compatible projection layers to account for dimension mismatch, if any). In contrast, a teacher CE model doesn't directly provide document and query embeddings, and so to effectively employ embedding matching-based distillation  requires modifying the scoring layer with \emph{dual-pooling}~\citep{yadav2022efficient} and adding various regularizers.
Both of these changes improve geometry of teacher embeddings and facilitate effective knowledge transfer to the student DE model via embedding matching-based distillation.

Our key contributions toward improving IR models via distillation are:
\begin{list}{\textbullet}{\leftmargin=1.35em \itemindent=0em \itemsep=0pt\parsep=1mm}
\vspace{-3mm}
    \item We provide the first rigorous analysis of the teacher-student generalization gap for IR settings which captures the role of alignment of embedding spaces of the teacher and student towards reducing the gap (Sec.~\ref{sec:theory}).
    \item Inspired by our analysis, we propose a novel distillation approach for neural IR models, namely \embd, that goes beyond score matching and aligns the embedding spaces of the teacher and student models (Sec.~\ref{sec:embed_distill}). We also show that \embd~can leverage synthetic data to improve a student by further aligning the embedding spaces of the teacher and student (Sec.~\ref{sec:augmentation}).
    \item Our analysis motivates novel distillation setups. Specifically, we consider a student DE model with an \emph{asymmetric} configuration, consisting of a small query encoder and a \textit{frozen} document encoder inherited from the teacher. This 
    significantly reduces inference latency of query embedding generation, while leveraging the teachers' high-quality document index (Sec.~\ref{sec:detode}). 
    \item We provide a \textit{comprehensive} empirical evaluation of \embd~(Sec.~\ref{sec:experiments}) on two standard IR benchmarks -- Natural Questions~\citep{Kwiatkowski:2019NQ} and MSMARCO~\citep{Nguyen:2016}. We also evaluate \embd~on BEIR benchmark~\citep{thakur2021beir} which is used to measure the \textit{zero-shot} performance of an IR model.
\vspace{-2mm}
\end{list}

\begin{figure}[t]
    \vspace{-4mm}
    \centering
    \begin{subfigure}[b]{0.7\textwidth}
        \centering
        \includegraphics[page=2,width=\linewidth,trim=0.5cm 6cm 6cm 0.5cm,clip]{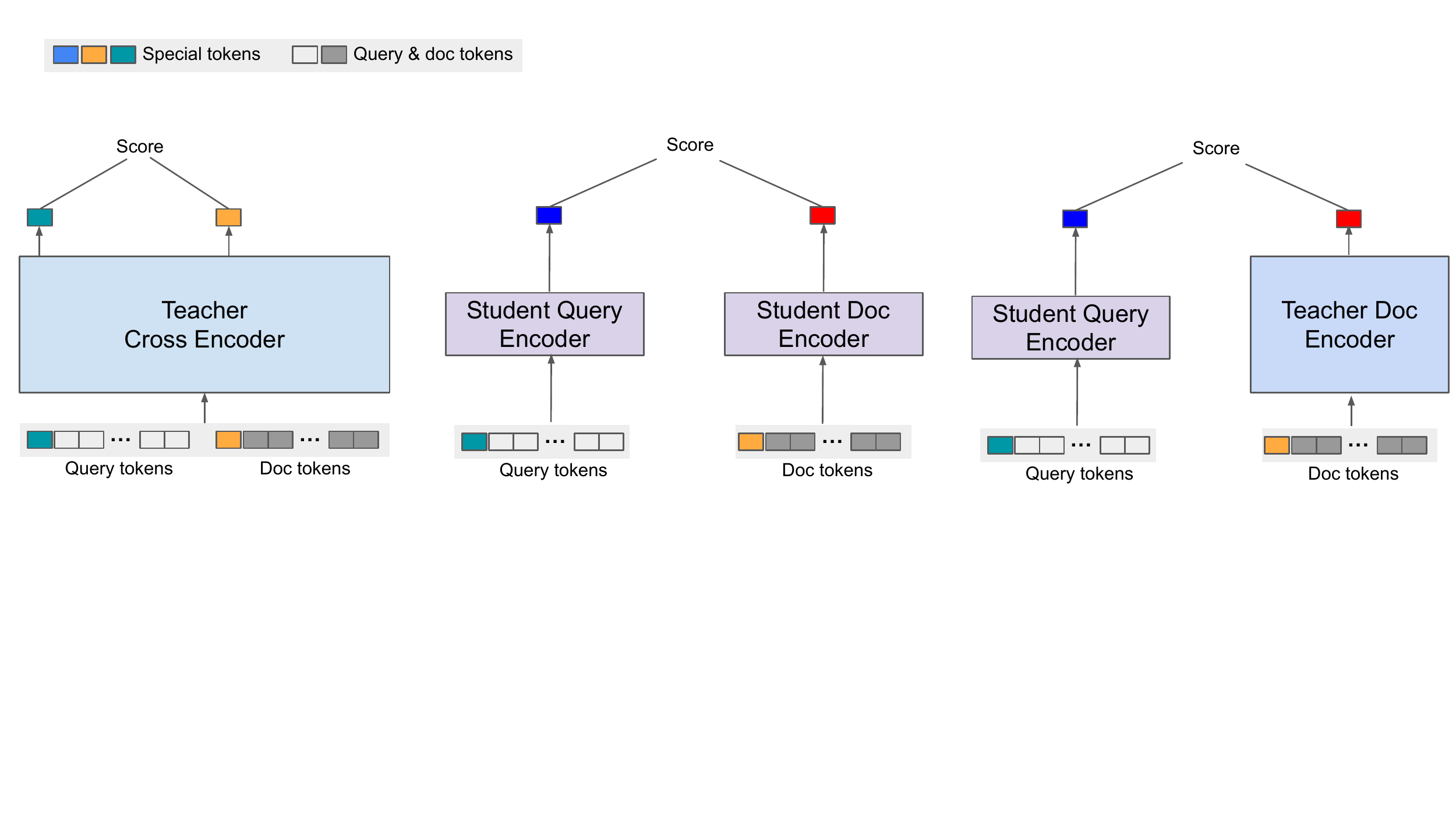}
        \caption{DE to DE distillation \vspace*{3mm}}
        \label{fig:de2de-embed}
    \end{subfigure}
    \begin{subfigure}[b]{0.7\textwidth}
        \centering
        \includegraphics[page=4,width=\linewidth,trim=0.5cm 6.3cm 8cm 0.5cm,clip]{figures/embeddistil-drawing.pdf}
        \caption{CE to DE distillation}
        \label{fig:ce2de-embed}
    \end{subfigure}
    \caption{Proposed distillation method with query embedding matching. 
    \textbf{Top:} In the DE to DE distillation setting, the student employs an asymmetric DE configuration with a small query encoder and a large (non-trainable) document encoder inherited from the teacher DE model. {The smaller query encoder ensures small latency for encoding query during inference, and large document encoder leads to a good quality document index. \textbf{Bottom:} Similarly the setting of CE to DE distillation using \embd, with teacher CE model employing dual pooling.}
    }
    \label{fig:distill-embed}
    \vspace{-3mm}
\end{figure}

Note that prior works have utilized embedding alignment during distillation for {\em non-IR} setting~\citep[see, e.g.,][]{romero2014fitnets,Sanh:2019DistilBERT,Jiao:2020TinyBERT,aguilar2020knowledge,zhang2020improve, chen2022knowledge}. However, to the best of our knowledge, our work is the first to study embedding matching-based distillation method for IR settings which requires addressing multiple IR-specific challenges such as cross-architecture distillation, partial representation alignment, and enabling novel asymmetric student configurations. Furthermore, unlike these prior works, our proposed method is theoretically justified to reduce the teacher-student performance gap.

\section{Background}
\label{sec:bg}

Let $\QCal$ and $\DCal$ denote the query and document spaces, respectively. An IR model is equivalent to a scorer $s: \QCal \times \DCal \to \R$, i.e., it assigns a (relevance) score $s(q, d)$ for a query-document pair $(q, d) \in \QCal \times \DCal$. Ideally, we want to learn %an IR model or 
a scorer such that $s(q, d) > s(q, d')$ \emph{iff} the document $d$ is more relevant to the query $q$ than document $d'$. We assume access to $n$ labeled training examples $\SCal_n = \{(q_i, \mathbf{d}_i, \mathbf{y}_i)\}_{i \in [n]}$. 
Here, $\mathbf{d}_{i} = (d_{i,1},\ldots, d_{i,L}) \in \DCal^L,~\forall i \in [n]$, denotes a list of $L$ documents and $\mathbf{y}_{i} = (y_{i,1},\ldots, y_{i,L}) \in \{0,1\}^L$ denotes the corresponding labels such that $y_{i,j} = 1$ iff the document $d_{i,j}$ is relevant to the query $q_i$. Given $\SCal_n$, we learn an IR model by minimizing 
\vspace{-1mm}
\begin{equation}
\label{eq:erm-one-hot}
R(s; \SCal_n) := \frac{1}{n}\sum\nolimits_{i \in [n]} \ell\big( s_{q_i, \mathbf{d}_i}, \mathbf{y}_i\big),
\end{equation}
where $s_{q_i, \mathbf{d}_i} := (s(q_i, d_{1,i}),\ldots, s(q_i, d_{1,L}))$ and  $\ell\big( s_{q_i, \mathbf{d}_i}, \mathbf{y}_i\big)$ denotes the loss 
$s$ incurs on $(q_i, \mathbf{d}_i, \mathbf{y}_i)$. Due to space constraint, we defer concrete choices for the loss function $\ell$ to Appendix~\ref{appendix:loss}.

While this learning framework is general enough to work with any IR models, next, we formally introduce two families of Transformer-based IR models that are prevalent in the recent literature.

\vspace{-1mm}
\subsection{Transformer-based IR models: Cross-encoders and Dual-encoders}
\vspace{-1mm}

Let query $q$ = $(q^1,\ldots, q^{m_1})$ and document $d$ = $(d^1,\ldots,  d^{m_2})$ consist of $m_1$ and $m_2$ tokens, respectively. We now discuss how Transformers-based CE and DE models process the $(q, d)$ pair. 

\vspace{-0.5mm}
\noindent\textbf{Cross-encoder model.}~Let $p = [q; d]$ be the %$m_1 + m_2$ length 
sequence obtained by concatenating $q$ and $d$. Further, let $\tilde{p}$ be the sequence obtained by adding special tokens such {\tt[CLS]} and {\tt[SEP]} to $p$. Given an encoder-only Transformer model ${\rm Enc}$, the relevance score for the $(q, d)$ pair is 
\vspace{-1mm}
\begin{equation}
\label{eq:ce-desc}
    s(q, d) = \langle w, {\rm pool}\big({\rm Enc}(\tilde{p})\big) \rangle = \langle w, {\tt emb}_{q, d} \rangle,
\end{equation}
where $w$ is a $d$-dimensional classification vector, and ${\rm pool}(\cdot)$ denotes a pooling operation that transforms the contextualized token embeddings ${\rm Enc}(\tilde{p})$ to a joint embedding vector ${\tt emb}_{q, d}$. {\tt[CLS]}-pooling is a common operation that simply outputs the embedding of the {\tt[CLS]} token as ${\tt emb}_{q, d}$.

\vspace{-0.5mm}
\noindent\textbf{Dual-encoder model.}~Let $\tilde{q}$ and $\tilde{d}$ be the sequences obtained by adding appropriate special tokens to $q$ and $d$, respectively. A DE model comprises two (encoder-only) Transformers ${\rm Enc}_Q$ and ${\rm Enc}_D$, which we call query and document encoders, respectively.\footnote{It is common to employ dual-encoder models where query and document encoders are shared.} Let ${\tt emb}_q$ = ${\rm pool}\big({\rm Enc}_{Q}(\tilde{q})\big)$ and ${\tt emb}_d$ = ${\rm pool}\big({\rm Enc}_{D}(\tilde{d})\big)$ denote the query and document embeddings, respectively. Now, one can define 
$s(q,d) = \langle {\tt emb}_q,  {\tt emb}_d\rangle$
to be the relevance score assigned to the $(q,d)$ pair by the DE model.

\vspace{-1mm}
\subsection{Score-based distillation for IR models}
\label{sec:ir-distill}
\vspace{-1mm}

Most distillation schemes for IR~\citep[e.g.,][]{Lu:2020TwinBERT, Hofstatter:2020, Chen:2021KD} 
rely on teacher relevance scores. Given a training set $\SCal_n$ %$=  \{(q_i, \mathbf{d}_i, \mathbf{y}_i)\}_{i \in [n]}$ 
and a teacher with scorer $s^{\rm t}$, one learns a student with scorer $s^{\rm s}$ by minimizing
\vspace{-1mm}
\begin{equation}
\label{eq:emp_disk_distill}
R(s^{\rm s}, s^{\rm t}; \SCal_n) = \frac{1}{n} \sum\nolimits_{i \in [n]} \ell_{\rm d}\big(s^{\rm s}_{q, \mathbf{d}_i}, s^{\rm t}_{q, \mathbf{d}_i} \big),
\vspace{-1mm}
\end{equation}
where $\ell_{\rm d}$ captures the discrepancy between $s^{\rm s}$ and $s^{\rm t}$. See Appendix~\ref{appendix:loss} for common choices for $\ell_{\rm d}$.

\section{Teacher-student generalization gap: Inspiration for embedding alignment}
\label{sec:theory}

Our main objective is to devise novel distillation methods to realize high-performing student DE models. As a first step in this direction, we rigorously study the teacher-student generalization gap as realized by standard (score-based) distillation in IR settings. Informed by our analysis, we subsequently identify novel ways to improve the student model's performance. In particular, our analysis suggests two natural directions to reduce the teacher-student generalization gap: 1) enforcing tighter alignment between embedding spaces of teacher and student models; and 2) exploring novel asymmetric configuration for student DE model. 

{Let $R(s) = \mathbb{E}\left[\ell\big( s_{q, \mathbf{d}}, \mathbf{y}\big)\right]$ be the population version of the empirical risk in Eq.~\ref{eq:erm-one-hot}, which measures the test time performance of the IR model defined by the scorer~$s$. 
Thus, $R(s^{\rm s}) - R(s^{\rm t})$ denotes the \textit{teacher-student generalization gap}. In the following result, we bound this quantity (see Appendix~\ref{sec:bin_ce_risk} for a formal statement and proof). We focus on distilling a teacher DE model to a student DE model and $L=1$ (cf.~Sec.~\ref{sec:bg}) as it leads to easier exposition without changing the main takeaways. Our analysis can be extended to $L > 1$ or CE to DE distillation with more complex notation.}

\begin{theorem}[Teacher-student generalization gap (informal)]
\label{thm:teacher-student-gap}
Let $\FCal$ and $\GCal$ denote the function classes for the query and document encoders for the student model, respectively. Suppose that the score-based distillation loss $\ell_{\rm d}$ in Eq.~\eqref{eq:emp_disk_distill} is based on binary cross entropy loss (Eq.~\eqref{eq:binary-ce-distill} in Appendix~\ref{appendix:loss}). Let one-hot (label-dependent) loss $\ell$ in Eq.~\eqref{eq:erm-one-hot} %$\ell^{y}(q,d)$ 
be the binary cross entropy loss (Eq.~\eqref{eq:binary-ce-one-hot} in Appendix~\ref{appendix:loss}). Further, assume that all encoders have the same output dimension and embeddings have their $\ell_2$-norm bounded by $K$. Then, we have
\begin{equation}
\begin{aligned}
&R(s^{\rm s}) - R(s^{\rm t}) \leq \ECal_n (\FCal, \GCal) + 2K R_{{\rm Emb}, Q}({\rm t}, {\rm s}; \SCal_n) + 2K R_{{\rm Emb}, D}({\rm t}, {\rm s}; \SCal_n) \\
&\qquad \qquad \qquad \quad + \Delta(s^{\rm t}; \SCal_n) + K^2\big(\mathbb{E}\left[\left|\sigma(s^{\rm t}_{q,d})-y\right|\right] + \frac{1}{n}\sum_{i \in [n]}\left|\sigma(s^{\rm t}_{q_i,d_i})-y_i\right|\big),
\end{aligned}
\label{eq:ts-gap}
\end{equation}
where  
$\ECal_n (\FCal, \GCal) := 
\sup_{s^{\rm s} \in \FCal \times \GCal} \big|R(s^{\rm s}, s^{\rm t}; \SCal_n) - \mathbb{E}\ell_{\rm d} \big( s^{\rm s}_{q,d}, s_{q,d}^{\rm t}\big)\big| %\nonumber
$; 
$\sigma$ denotes the sigmoid function; and $\Delta(s^{\rm t}; \SCal_n)$ denotes the deviation between the empirical risk (on $\SCal_n$) and population risk of the teacher $s^{\rm t}$. {Here, $R_{{\rm Emb}, Q}({\rm t}, {\rm s}; \SCal_n)$ and $R_{{\rm Emb}, D}({\rm t}, {\rm s}; \SCal_n)$ measure misalignment between teacher and student embeddings by focusing on queries and documents, respectively (cf.~Eq.~\ref{eq:emb-loss-de2de-q} \& \ref{eq:emb-loss-de2de-d} in Sec.~\ref{sec:detode}).}
\end{theorem}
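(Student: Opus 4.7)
The plan is to prove the bound by a chain of inequalities that pivots on a BCE-specific algebraic identity, decomposing the excess risk into (i) a distillation term that couples to embedding misalignment, (ii) a teacher-miscalibration correction, and (iii) the teacher's own generalization gap, each controlled by an elementary tool.

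The starting point I would establish is the identity
\[
\ell(s, y) \;=\; \ell_{\rm d}(s, s^{\rm t}) \;-\; (y - \sigma(s^{\rm t}))\cdot s,
\]
valid for all $s, s^{\rm t} \in \R$ and $y \in \{0,1\}$ under the BCE choices of $\ell$ and $\ell_{\rm d}$; it drops out of a one-line expansion using $\log\sigma(s) - \log(1-\sigma(s)) = s$. Applying it at both $s = s^{\rm s}$ and $s = s^{\rm t}$ and taking expectations gives
\[
R(s^{\rm s}) - R(s^{\rm t}) \;=\; \E\big[\ell_{\rm d}(s^{\rm s}_{q,d}, s^{\rm t}_{q,d}) - \ell_{\rm d}(s^{\rm t}_{q,d}, s^{\rm t}_{q,d})\big] \;-\; \E\big[(y - \sigma(s^{\rm t}_{q,d}))(s^{\rm s}_{q,d} - s^{\rm t}_{q,d})\big],
\]
cleanly decomposing the gap into a distillation-excess piece and a miscalibration piece.

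I would then pass from the population distillation expectation to its empirical version $R(s^{\rm s}, s^{\rm t}; \SCal_n)$ at the cost of $\ECal_n(\FCal, \GCal)$ by its defining supremum, and rewrite the teacher-only term $\frac{1}{n}\sum_i \ell_{\rm d}(s^{\rm t}_{q_i,d_i}, s^{\rm t}_{q_i,d_i})$ using the identity at $s^{\rm t}$ as $\hat R(s^{\rm t}) + \frac{1}{n}\sum_i (y_i - \sigma(s^{\rm t}_{q_i,d_i}))\, s^{\rm t}_{q_i,d_i}$, after which $\hat R(s^{\rm t}) - R(s^{\rm t}) \leq \Delta(s^{\rm t}; \SCal_n)$ by definition. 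For the residual $\frac{1}{n}\sum_i \big[\ell_{\rm d}(s^{\rm s}_{q_i,d_i}, s^{\rm t}_{q_i,d_i}) - \ell_{\rm d}(s^{\rm t}_{q_i,d_i}, s^{\rm t}_{q_i,d_i})\big]$, I would combine the $1$-Lipschitzness of $\ell_{\rm d}$ in its first argument (since $|\partial_s \ell_{\rm d}(s, s^{\rm t})| = |\sigma(s) - \sigma(s^{\rm t})| \leq 1$) with the Cauchy--Schwarz decomposition
\[
\big|\langle \embsq, \embsd \rangle - \langle \embtq, \embtd \rangle\big| \;\leq\; \|\embsd\|\cdot\|\embsq - \embtq\| \;+\; \|\embtq\|\cdot\|\embsd - \embtd\|
\]
and the norm bound $K$ to produce the $K\,R_{\text{Emb},Q}(\text{t},\text{s};\SCal_n) + K\,R_{\text{Emb},D}(\text{t},\text{s};\SCal_n)$ contributions, the extra factor of two in the stated bound absorbing routine slack in combining these inequalities.

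Finally, the two remaining linear-in-$s$ quantities---the empirical $\frac{1}{n}\sum_i (y_i - \sigma(s^{\rm t}_{q_i,d_i}))\, s^{\rm t}_{q_i,d_i}$ from the rewrite and the population miscalibration term from the initial decomposition---are each bounded using $|y - \sigma(s^{\rm t})| \leq 1$ and the score magnitude bound $|s| = |\langle \text{emb}_q, \text{emb}_d\rangle| \leq K^2$ (Cauchy--Schwarz against the norm ball), yielding the $K^2\big(\E|\sigma(s^{\rm t}_{q,d}) - y| + \frac{1}{n}\sum_i |\sigma(s^{\rm t}_{q_i,d_i}) - y_i|\big)$ tail. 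I expect the main obstacle to be bookkeeping rather than analysis: the BCE identity must be applied consistently at three distinct places (the expectation-level decomposition, the rewrite of the empirical distillation risk, and the identification of $\hat R(s^{\rm t})$), and the miscalibration residuals from each invocation must be collected so that the empirical and population instances of $|y - \sigma(s^{\rm t})|$ appear exactly as stated rather than being double-counted or lumped together. The remaining ingredients---the BCE identity, Lipschitzness of $\ell_{\rm d}$ in its first argument, Cauchy--Schwarz with $K$-bounded embeddings, the definition of $\ECal_n(\FCal, \GCal)$, and the teacher generalization gap $\Delta(s^{\rm t}; \SCal_n)$---are elementary.
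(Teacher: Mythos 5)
Your proposal is correct and follows essentially the same route as the paper's proof: the same four-term decomposition into (population one-hot vs.\ population distillation risk via the BCE identity, giving the $K^2$ population miscalibration term), the uniform deviation $\ECal_n(\FCal,\GCal)$, the empirical distillation risk vs.\ the teacher's empirical risk (embedding terms plus the $K^2$ empirical miscalibration term), and finally $\Delta(s^{\rm t};\SCal_n)$. The only difference is mechanical: where the paper handles the middle step by swapping in hybrid scorers ($g\to G$, then $f\to F$) using the softplus rewriting of $\ell_{\rm d}$, you use the $1$-Lipschitzness of $\ell_{\rm d}$ in its first argument together with a single Cauchy--Schwarz split of $s^{\rm s}-s^{\rm t}$, which in fact yields the slightly sharper constant $K$ in place of $2K$ on the embedding-misalignment terms and hence still implies the stated bound.
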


The last three quantities in the bound in Thm.~\ref{thm:teacher-student-gap}, namely $\Delta(s^{\rm t}; \SCal_n)$, $\mathbb{E}[|\sigma(s^{\rm t}_{q,d})-y|]$, and $\frac{1}{n}\sum_{i \in [n]}|\sigma(s^{\rm t}_{q_i,d_i})-y_i|$, are \textit{independent} of the underlying student model. These terms solely depend on the quality of the underlying teacher model $s^{\rm t}$. That said, the teacher-student gap can be made small by reducing the following three terms: 1) uniform deviation of the student's empirical distillation risk from its population version $\ECal_n (\FCal, \GCal)$; 2) misalignment between teacher student query embeddings $R_{{\rm Emb}, Q}({\rm t}, {\rm s}; \SCal_n)$; and 3) misalignment between teacher student document embeddings $R_{{\rm Emb}, D}({\rm t}, {\rm s}; \SCal_n)$. 

The last two terms motivate us to propose an \textit{embedding matching}-based distillation that explicitly aims to minimize these terms during student training. Even more interestingly, these terms also inspire an \textit{asymmetric DE configuration} for the student which strikes a balance between the goals of reducing the misalignment between the embeddings of teacher and student (by inheriting teacher's document encoder) and ensuring {serving efficiency (small inference latency)} by employing a small query encoder. Before discussing these proposals in detail in Sec.~\ref{sec:embed_distill} and Fig.~\ref{fig:distill-embed}, we explore the first term $\ECal_n (\FCal, \GCal)$ and highlight how our proposals also have implications for reducing this term. Towards this, the following result bounds $\ECal_n (\FCal, \GCal)$. We present an informal statement of the result (see Appendix~\ref{appen:unif_conv_bd} for a more precise statement and proof).

\begin{proposition}
\label{prop:lower_complexity_with_frozen_docencoder-main} Let $\ell_{\rm d}$ be a distillation loss which is $L_{\ell_{\rm d}}$-Lipschitz in its first argument. Let $\FCal$ and $\GCal$ denote the function classes for the query and document encoders, respectively. Further assume that, for each query and document encoder in our function class, the query and document embeddings have their $\ell_2$-norm bounded by $K$. Then,
\begin{equation}
  \label{eq:deviation-1}
  \ECal_n (\FCal, \GCal) 
   \leq
  \mathbb{E}_{\SCal_n}\frac{48 K L_{\ell_{\rm d}}}{\sqrt{n}} \int_{0}^\infty \sqrt{\log \big(N(u, \FCal)N(u, \GCal)\big)} \;du.
\end{equation}
Furthermore, with a fixed document encoder, i.e., $\GCal = \{g^*\}$, % we have
\begin{equation}
 \ECal_n (\FCal, \{g*\}) \leq \mathbb{E}_{\SCal_n} \frac{48 K L_{\ell_{\rm d}}}{\sqrt{n}} \int_{0}^\infty \sqrt{\log N(u, \FCal)} \;du.
  \label{eq:deviation-2}
\end{equation}
Here, $N(u, \cdot)$ is the $u$-covering number of a function class.
\end{proposition}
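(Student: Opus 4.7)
The argument is a standard symmetrization--contraction--chaining, whose only nontrivial ingredient is a covering-number bound for the bilinear score class that cleanly separates the contributions of the query and document encoder classes.

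First, the symmetrization lemma gives $\mathbb{E}_{\SCal_n}\ECal_n(\FCal,\GCal) \le 2\,\mathbb{E}_{\SCal_n,\sigma}\sup_{f\in\FCal,g\in\GCal}\bigl|\tfrac{1}{n}\sum_{i\in[n]}\sigma_i\,\ell_{\rm d}(\langle f(q_i),g(d_i)\rangle,\,s^{\rm t}(q_i,d_i))\bigr|$, where the $\sigma_i$ are i.i.d.\ Rademacher. Since $\ell_{\rm d}$ is $L_{\ell_{\rm d}}$-Lipschitz in its first argument and the teacher scores are constants conditional on $\SCal_n$, Talagrand's contraction lemma strips $\ell_{\rm d}$ at the cost of a factor $L_{\ell_{\rm d}}$, leaving the empirical Rademacher complexity $\rad_n(\HCal)$ of the bilinear class $\HCal := \{(q,d)\mapsto \langle f(q),g(d)\rangle : f\in\FCal,\,g\in\GCal\}$.

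The key step is controlling $N(u,\HCal)$. Adding and subtracting $\langle f'(q),g(d)\rangle$ and applying Cauchy--Schwarz with the uniform bound $\|f(\cdot)\|_2,\|g(\cdot)\|_2\le K$ yields $|\langle f(q),g(d)\rangle-\langle f'(q),g'(d)\rangle|\le K(\|f(q)-f'(q)\|_2+\|g(d)-g'(d)\|_2)$, so any pair of $u$-covers of $\FCal$ and $\GCal$ compose into a $2Ku$-cover of $\HCal$, giving $N(2Ku,\HCal)\le N(u,\FCal)\,N(u,\GCal)$. Plugging this into Dudley's entropy integral $\rad_n(\HCal)\le (12/\sqrt{n})\int_0^\infty \sqrt{\log N(v,\HCal)}\,dv$ and substituting $v=2Ku$ produces the integrand $\sqrt{\log[N(u,\FCal)\,N(u,\GCal)]}$ with an extra multiplicative $2K$. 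Composing the factors $2$ (symmetrization), $L_{\ell_{\rm d}}$ (contraction), $12$ (Dudley), and $2K$ (change of variables) gives exactly $48\,K\,L_{\ell_{\rm d}}$, establishing Eq.~\eqref{eq:deviation-1}. The specialized bound Eq.~\eqref{eq:deviation-2} is immediate upon setting $N(u,\{g^*\})=1$ in the same chain, leaving only $\sqrt{\log N(u,\FCal)}$ in the integrand.

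The main technical care lies in the covering step: ensuring that the pointwise inequality above is compatible with whichever metric (sup-norm or empirical $L^2$) is used to state Dudley's bound and the covers of $\FCal,\GCal$ (the inequality holds pointwise, hence in any data-dependent seminorm), and truncating the Dudley integral at the diameter of $\HCal$, which is at most $2K^2$ by Cauchy--Schwarz so that the integrand has effectively compact support. These are bookkeeping concerns rather than substantive obstacles; the only genuinely new piece is the bilinear covering composition, whose correctness and the resulting factorization of $N(u,\FCal)\,N(u,\GCal)$ are exactly what allow the fixed-document-encoder specialization to collapse the integrand and improve the bound.
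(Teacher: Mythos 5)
Your proposal is correct and follows essentially the same route as the paper: symmetrization plus Lipschitz contraction giving the factor $2L_{\ell_{\rm d}}$, Dudley's entropy integral with constant $12$, a product-class covering bound $N(2Ku,\FCal\GCal)\le N(u,\FCal)\,N(u,\GCal)$ obtained by adding and subtracting a cross term with Cauchy--Schwarz (the paper's Lemma on covers of $\FCal\GCal$, stated directly in the empirical $L_2(\mathbb{P}_n)$ norm with $K_Q=K_D=K$), and a change of variables assembling the constant $48KL_{\ell_{\rm d}}$, with the frozen-encoder case following from $N(u,\{g^*\})=1$. No gaps.
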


Note that Eq.~\eqref{eq:deviation-1} and Eq.~\ref{eq:deviation-2} correspond to uniform deviation when we train \emph{without} and \emph{with} a frozen document encoder, respectively.
It is clear that the bound in Eq.~\ref{eq:deviation-2} is less than or equal to that in Eq.~\eqref{eq:deviation-1} (because $N(u, \GCal) \ge 1$ for any $u$),
which alludes to desirable impact of employing a frozen document encoder as one of our proposal seeks to do via \textit{inheriting teacher's document encoder} (for instance in an asymmetric DE configuration). Furthermore, our proposal of employing an embedding-matching task will regularize the function class of query encoders; effectively reducing it to $\FCal'$ with $|\FCal'| \leq |\FCal|$. The same holds true for document encoder function class when document encoder is trainable (as in Eq.~\eqref{eq:deviation-1}), leading to an effective function class $\GCal'$ with $|\GCal'| \leq |\GCal|$. Since we would have $N(u,\FCal') \leq N(u, \FCal)$ and $N(u,\GCal') \leq N(u, \GCal)$, this suggests desirable implications of embedding matching for reducing the uniform deviation bound.

\section{Embedding-matching based distillation}
\label{sec:embed_distill}

Informed by our analysis of teacher-student generalization gap in Sec.~\ref{sec:theory}, we propose \embd~-- a novel distillation method that explicitly focuses on aligning the embedding spaces of the teacher and student. Our proposal goes beyond existing distillation methods in the IR literature that only use the teacher scores. Next, we introduce \embd~for two prevalent settings: (1) distilling a large DE model to a smaller DE model;
\footnote{
CE to CE distillation is a special case of this with classification vector $w$ (cf.~Eq.~\eqref{eq:ce-desc}) as trivial second encoder.} 
and (2) distilling a CE model to a DE model.

\vspace{-1mm}
\subsection{DE to DE distillation}
\label{sec:detode}
\vspace{-1mm}
Given a $(q, d)$ pair, let ${\tt emb}^{\rm t}_q$ and ${\tt emb}^{\rm t}_d$ be the query and document embeddings produced by the query encoder ${\rm Enc}^{\rm t}_{Q}$ and document encoder ${\rm Enc}^{\rm t}_{D}$ of the teacher DE model, respectively. Similarly, let ${\tt emb}^{\rm s}_q$ and ${\tt emb}^{\rm s}_d$ denote the query and document embeddings produced by a student DE model with $({\rm Enc}^{\rm s}_{Q}, {\rm Enc}^{\rm s}_{D})$ as its query and document encoders. Now, \embd~optimizes the following embedding alignment losses in addition to the score-matching loss from Sec.~\ref{sec:ir-distill} to align query and document embeddings of the teacher and student:
{
\begin{align}
\label{eq:emb-loss-de2de-q}
R_{{\rm Emb}, Q}({\rm t}, {\rm s}; \SCal_n) 
= \frac{1}{n}\sum\nolimits_{q \in \SCal_n} \|{\tt emb}^{\rm t}_{q}  - {\rm proj}\big({\tt emb}^{\rm s}_{q}\big) \|; \\
R_{{\rm Emb}, D}({\rm t}, {\rm s}; \SCal_n) 
% \nonumber \\ &\qquad
= \frac{1}{n}\sum\nolimits_{d \in \SCal_n} \|{\tt emb}^{\rm t}_{d}  - {\rm proj}\big({\tt emb}^{\rm s}_{d}\big) \|. \label{eq:emb-loss-de2de-d}
\end{align}
}%

\noindent\textbf{Asymmetric DE.}~We also propose a novel student DE configuration where the student employs the teacher's document encoder (i.e., ${\rm Enc}^{\rm s}_{D} = {\rm Enc}^{\rm t}_{D}$) and only train its query encoder, which is much smaller compared to the teacher's query encoder.
For such a setting, it is natural to only employ the embedding matching loss in Eq.~\eqref{eq:emb-loss-de2de-q} as the document embeddings are aligned by design (cf.~Fig.~\ref{fig:de2de-embed}).

Note that this asymmetric student DE does not incur an increase in latency despite the use of a large teacher document encoder. This is because the large document encoder is only needed to create a good quality document index offline, and only the query encoder is evaluated at inference time. Also, the similarity search cost is not increased as the projection layer ensures the same small embedding dimension as in the symmetric DE student.
Thus, for DE to DE distillation, we prescribe the asymmetric DE configuration universally. Our theoretical analysis (cf.~Sec.~\ref{sec:theory}) and experimental results (cf.~Sec.~\ref{sec:experiments}) suggest that the ability to inherit the document tower from the teacher DE model can drastically improve the final performance, especially when combined with query embedding matching task (cf.~Eq.~\ref{eq:emb-loss-de2de-q}).

\vspace{-1mm}
\subsection{CE to DE distillation}
\label{sec:cetode}
\vspace{-1mm}
Given that CE models jointly encode query-document pairs, individual query and document embeddings are not readily available to implement embedding matching losses as per Eq.~\ref{eq:emb-loss-de2de-q} and \ref{eq:emb-loss-de2de-d}. This makes it challenging to employ \embd~for CE to DE distillation.

As a na\"{i}ve solution, for a \smash{$(q, d)$} pair, one can simply match a joint transformation of the student's query embedding \smash{${\tt emb}^{\rm s}_q$} and document embedding ${\tt emb}^{\rm s}_d$ to the teacher's joint embedding ${\tt emb}^{\rm t}_{q, d}$ , produced by (single) teacher encoder \smash{${\rm Enc}^{t}$}. However, {we observed} that including such an embedding matching task often leads to severe over-fitting, and results in a poor student. 
Since
$
s^{\rm t}(q, d) = \langle w, {\tt emb}^{\rm t}_{q, d}\rangle
$, during CE model training, the joint embeddings ${\tt emb}^{\rm t}_{q, d}$ for relevant and irrelevant $(q, d)$ pairs are encouraged to be aligned with $w$ and $-w$, respectively. {This produces degenerate embeddings that do not capture semantic query-to-document relationships. We notice that even the final query and document token embeddings lose such semantic structure (cf.~Appendix~\ref{appen:embed-analysis-ce2de}). Thus, a teacher CE model with $s^{\rm t}(q, d) = \langle w, {\tt emb}^{\rm t}_{q, d}\rangle$ 
does not add value for distillation beyond score-matching;
in fact, it \emph{hurts} to include na\"{i}ve embedding matching.}
Next, we propose a modified CE model training strategy that facilitates \embd.

\noindent \textbf{CE models with dual pooling.}
A \emph{dual pooling} scheme is employed in the scoring layer to produce two embeddings ${\tt emb}^{\rm t}_{q \leftarrow (q,d)}$ and ${\tt emb}^{\rm t}_{d \leftarrow (q,d)}$ from a CE model that serve as the \emph{proxy} query and document embeddings, respectively. Accordingly, we define the relevance score as $s^{\rm t}(q, d) = \langle {\tt emb}^{\rm t}_{q \leftarrow (q,d)},{\tt emb}^{\rm t}_{d \leftarrow (q,d)}\rangle$. We explore two variants of dual pooling: {(1) special token-based pooling that pools from \texttt{[CLS]} and \texttt{[SEP]}; and (2) segment-based weighted mean pooling that separately performs weighted averaging on the query and document segments of the final token embeddings. See Appendix~\ref{appendix:pooling} for details.}

In addition to dual pooling, we also utilize a reconstruction loss during the CE training, which measures the likelihood of predicting each token of the original input from the final token embeddings. This loss encourages reconstruction of query and document tokens based on the final token embeddings and prevents the degeneration of the token embeddings during training. Given proxy embeddings from the teacher CE, we can perform \embd~with the embedding matching loss defined in Eq.~\eqref{eq:emb-loss-de2de-q} and Eq.~\eqref{eq:emb-loss-de2de-d} (cf.~Fig.~\ref{fig:ce2de-embed}).

\subsection{{Task-specific online data generation}}
\label{sec:augmentation}

Data augmentation as a general technique has been previously considered in the IR literature~\citep[see, e.g., ][]{Nogueira:2019docTOquery, ouguz2021domain, Izacard:2021Contriever}, especially in data-limited, out-of-domain, or zero-shot settings.
As \embd~aims to align the embeddings spaces of the teacher and student, the ability to generate similar queries or documents can naturally help enforce such an alignment globally on the task-specific manifold. Given a set of unlabeled task-specific query and document pairs $\UCal_m$,
we can further add the embedding matching losses $R_{\rm Emb, Q}({\rm t}, {\rm s}; \UCal_m)$ or $R_{\rm Emb, D}({\rm t}, {\rm s}; \UCal_m)$ to our training objective.
Interestingly, for DE to DE distillation setting, our approach can even benefit from a large collection of task-specific queries $\QCal'$ or documents $\DCal'$. Here, we can independently employ embedding matching losses $R_{\rm Emb, Q}({\rm t}, {\rm s}; \QCal')$ or $R_{\rm Emb, D}({\rm t}, {\rm s}; \DCal')$ that focus on queries and documents, respectively. Please refer to Appendix~\ref{appendix:query-generation} describing how the task-specific data were generated.

\section{Experiments}
\label{sec:experiments}

\begin{table}[t]
\caption{\textit{Full} recall performance of various student DE models on NQ dev set, including symmetric DE student model (67.5M or 11.3M transformer for both encoders), and asymmetric DE student model (67.5M or 11.3M transformer as query encoder and document embeddings inherited from the teacher). All distilled students used the same teacher (110.1M parameter BERT-base models as both encoders), with the full  Recall@5 = 72.3, Recall@20 = 86.1, and Recall@100 = 93.6.}
    \label{tab:nq-dev}
    \centering
    \scalebox{0.85}{
    \begin{tabular}{@{}lc@{\hspace{2.0mm}}c@{ }cc@{\hspace{-2mm} }c@{\hspace{2.0mm}}c@{ }c@{}}
    \toprule
    \multirow{2}{*}{\textbf{Method}} & \multicolumn{3}{c}{\textbf{6-Layer (67.5M)}} && \multicolumn{3}{c}{\textbf{4-Layer (11.3M)}}\\
    \cmidrule(r){2-4} \cmidrule{6-8}  
    & \textbf{R@5} & \textbf{R@20} & \textbf{R@100} && \textbf{R@5} & \textbf{R@20} & \textbf{R@100} \\
    \midrule
    Train student directly             & 36.2 & 59.7 & 80.0 && 24.8 & 44.7 & 67.5 \\
    \; + Distill from teacher        & 65.3 & 81.6 & 91.2 && 44.3 & 64.9 & 81.0 \\
    \; + Inherit doc embeddings & 69.9 & 83.9 & 92.3 && 56.3 &  70.9 & 82.5 \\
    \; + Query embedding matching    & 72.7 & \best{86.5} & \best{93.9} && 61.2 & 75.2 & 85.1 \\
    \; + Query generation            & \best{73.4} & \best{86.3} & \best{93.8} && \best{64.3} & \best{77.8} & \best{87.9} \\
    \midrule
    Train student using only\\
    embedding matching and \\
    inherit doc embeddings & 71.4 & 84.9 & 92.6 &&  64.6 & 50.2  & 76.8 \\
    \; + Query generation           & 71.8 &  85.0 & 93.0  && 54.2 & 68.9 & 80.8 \\
    \bottomrule
    \end{tabular}}
\end{table}

We now conduct a comprehensive evaluation of the proposed distillation approach. Specifically, we highlight the utility of the approach for both DE to DE and CE to DE distillation. We also showcase the benefits of combining our distillation approach with query generation methods.

\subsection{Setup}

\noindent\textbf{Benchmarks and evaluation metrics.}~We consider two popular IR benchmarks --- Natural Questions (NQ)~\citep{kwiatkowski2019natural} and MSMARCO~\citep{Nguyen:2016}, which focus on finding the most relevant passage/document given a question and a search query, respectively. NQ provides both standard test and dev sets, whereas MSMARCO {provides only the dev set that are widely used for common benchmarks.} In what follows, we use the terms query (document) and question (passages) interchangeably. For NQ, {we use the standard full recall (\emph{strict}) as well as the \emph{relaxed} recall metric \citep{Karpukhin:2020} to evaluate the retrieval performance. 
For MSMARCO, we focus on the standard metrics \emph{Mean Reciprocal Rank} (MRR)@10, and \emph{normalized Discounted Cumulative Gain} (nDCG)@10 to evaluate both re-ranking and retrieval performance. 
For the re-ranking, we restrict to re-ranking only the top 1000 candidate document provided as part of the dataset to be fair, while some works use stronger methods to find better top 1000 candidates for re-ranking (resulting in higher evaluation numbers)

See Appendix~\ref{appendix:evaluation} for a detailed discussion on these evaluation metrics.} Finally, we also evaluate \embd~on the BEIR benchmark~\citep{thakur2021beir} in terms of nDCG@10 and recall@100 metrics.

\noindent\textbf{Model architectures.}~We follow the standard Transformers-based IR model architectures similar to~\citet{Karpukhin:2020,Qu:2021Rocket,ouguz2021domain}. We utilized various sizes of DE models based on BERT-base~\citep{Devlin:2019} (12-layer, 768 dim, 110M parameters), DistilBERT~\citep{Sanh:2019DistilBERT} (6-layer, 768 dim, 67.5M parameters -- $\sim$ 2/3 of base), or BERT-mini~\citep{turc2019well} (4-layer, 256 dim, 11.3M parameters -- $\sim$ 1/10 of base). For query generation (cf.~Sec.~\ref{sec:augmentation}), we employ BART-base~\citep{Lewis:2020BART}, an encoder-decoder model, to generate similar questions from each training example's input question (query). We randomly mask $10\%$ of
tokens and inject zero mean Gaussian noise with $\sigma=\{0.1,0.2\}$ between the encoder and decoder. See Appendix~\ref{appendix:query-generation} for more details on query generation and Appendix~\ref{appen:additional-training-details} for hyperparameters.

\begin{table}[t]
\caption{Performance of \embd~for DE to DE distillation on NQ test set. While prior works listed in the table rely on techniques such as negative mining and multi-stage training, we explore the orthogonal direction of embedding-matching that improves {\em single-stage} distillation, which can be combined with them.}
    \label{tab:nq_testset}
    \centering
    \scalebox{0.85}{
    \renewcommand{\arraystretch}{1.0}
    \begin{tabular}{@{}lc@{\;\;}cc@{}}
    \toprule
    \textbf{Method} & \textbf{\#Layers} & \textbf{R@20} & \textbf{R@100} \\
    \midrule
    DPR~\citep{Karpukhin:2020}             & 12 & 78.4 & 85.4 \\
    DPR + PAQ~\citep{ouguz2021domain}      & 12 & 84.0 & 89.2 \\
    DPR + PAQ~\citep{ouguz2021domain}      & 24 & 84.7 & 89.2 \\
    ACNE~\citep{Xiong:2021ANCE}            & 12 & 81.9 & 87.5 \\
    RocketQA~\citep{Qu:2021Rocket}         & 12 & 82.7 & 88.5 \\
    MSS-DPR~\citep{sachan-etal-2021-end}   & 12 & 84.0 & 89.2 \\
    MSS-DPR~\citep{sachan-etal-2021-end}   & 24 & 84.8 & 89.8 \\
    \midrule
    Our teacher~\citep{zhang2022ar2} & 12 (220.2M) & 85.4 & 90.0 \\
    \embd &  6 (67.5M) & 85.1 & 89.8 \\
    \embd &  4 (11.3M) & 81.2 & 87.4 \\
    \bottomrule
    \end{tabular}}
% \vspace{-3mm}
\end{table}

\subsection{DE to DE distillation}
\label{sec:exp-de-to-de-main}

We employ AR2~\citep{zhang2022ar2}\footnote{\url{https://github.com/microsoft/AR2/tree/main/AR2}} and SentenceBERT-v5~\citep{reimers-2019-sentence-bert}\footnote{\url{https://huggingface.co/sentence-transformers/msmarco-bert-base-dot-v5}} as teacher DE models for NQ and MSMARCO. Note that both models are based on BERT-base. For DE to DE distillation, we consider two kinds of configurations for the student DE model: 
    (1) \emph{Symmetric}:~We use identical question and document encoders. We evaluate DistilBERT and BERT-mini on both datasets. 
    (2) \emph{Asymmetric}:~The student inherits document embeddings from the teacher DE model and \emph{are not} trained during the distillation. For query encoder, we use DistilBERT or BERT-mini which are smaller than document encoder. 

\noindent\textbf{Student DE model training.}
We train student DE models using a combination of (i) one-hot loss (cf.~Eq.~\eqref{eq:ce-one-hot} in Appendix~\ref{appendix:loss}) on training data; (ii) distillation loss in (cf.~Eq.~\eqref{eq:ce-distill} in Appendix~\ref{appendix:loss}); and (iii) embedding matching loss in Eq.~\eqref{eq:emb-loss-de2de-q}.
We used \texttt{[CLS]}-pooling for all student encoders. Unlike DPR~\citep{Karpukhin:2020} or AR2, we do not use hard negatives from BM25 or other models, which greatly simplifies our distillation procedure.

\noindent\textbf{Results and discussion.}
To understand the impact of various proposed configurations and losses, we train models by sequentially adding components {
and evaluate their retrieval performance on NQ and MSMARCO dev set as shown in Table~\ref{tab:nq-dev} and Table~\ref{tab:msmarco_dede} respectively. (See Table~\ref{tab:nq-dev-relaxed} in Appendix~\ref{appendix:additional-exps-nq} for performance on NQ in terms of the relaxed recall and Table~\ref{tab:msmarco_dede_ndcg} in Appendix~\ref{appendix:additional-exps-msmarco} for MSMARCO in terms of nDCG@10.)
}

We begin by training a symmetric DE without distillation. As expected, moving to distillation brings in considerable gains. Next, we swap the student document encoder with document embeddings from the teacher (non-trainable), which leads to a good jump in the performance. Now we can introduce \embd~with Eq.~\eqref{eq:emb-loss-de2de-q} for aligning query representations between student and teacher. 
{The two losses are combined with weight of $1.0$ (except for BERT-mini models in the presence of query generation with $5.0$).} This improves performance significantly, e.g.,it provides $\sim$3 and $\sim$5 points increase in recall@5 on NQ with students based on DistilBERT and BERT-mini, respectively (Table~\ref{tab:nq-dev}). We further explore the utility of \embd~in aligning the teacher and student embedding spaces in Appendix~\ref{appen:embed-analysis-de2de}.

\begin{table}[t]
    \centering
    \caption{Performance of various DE models on MSMARCO dev set for both \emph{re-ranking} and \emph{retrieval} tasks ({full corpus}). The teacher model (110.1M parameter BERT-base models as both encoders) for re-ranking achieves MRR@10 of 36.8 and that for retrieval get MRR@10 of 37.2.  The table shows performance (in MRR@10) of the symmetric DE student model (67.5M or 11.3M transformer as both encoders), and asymmetric DE student model (67.5M or 11.3M transformer as query encoder and document embeddings inherited from the teacher).}
    \label{tab:msmarco_dede}
    \scalebox{0.85}{
    \begin{tabular}{@{}lccccc@{}}
    \toprule
    \multirow{2}{*}{\textbf{Method}} & \multicolumn{2}{c}{\textbf{Re-ranking}} && \multicolumn{2}{c}{\textbf{Retrieval}} \\
    \cmidrule{2-3} \cmidrule{5-6}
    & \textbf{67.5M} & \textbf{11.3M} && \textbf{67.5M} & \textbf{11.3M} \\
    \midrule
    Train student directly       & 27.0 & 23.0  && 22.6 & 18.6 \\
    \; + Distill from teacher     & 34.6 & 30.4 && 35.0  & 28.6  \\
    \; + Inherit doc embeddings   & 35.2 & 32.1 && 35.7 & 30.3  \\
    \; + Query embedding matching & {36.2} & \best{35.0} && 37.1 & \best{35.4}  \\
    \; + Query generation        &  {36.2} & 34.4 && \best{37.2} & 34.8 \\
    \midrule 
    Train student using only \\
    embedding matching and \\ 
    inherit doc embeddings & \best{36.5} & 33.5 && 36.6 & 31.4  \\
    \; + Query generation   & 36.4 & 34.1 && 36.7 & 32.8 \\
    \bottomrule
    \end{tabular}}
\end{table} 

On top of the two losses (standard distillation and embedding matching), we also use $R_{\rm Emb, Q}({\rm t}, {\rm s}; \QCal')$ from Sec.~\ref{sec:augmentation} on $2$ additional questions (per input question) generated from BART. We also try a variant where we eliminate the standard distillation loss and only employ the embedding matching loss in Eq.~\eqref{eq:emb-loss-de2de-q} along with inheriting teacher's document embeddings. This configuration without the standard distillation loss leads to excellent performance (with query generation again providing additional gains in most cases.)

\begin{table}[t]
    \caption{Average BEIR performance of our DE teacher and \embd~student models and their numbers of trainable parameters. Both models are trained on MSMARCO and evaluated on 14 other datasets (the average does not include MSMARCO). The full table is at Appendix~\ref{appen:beir}. With \embd, student materializes most of the performance of the teacher on the unforeseen datasets.}
    \label{tab:beir_teacher_student_only}
    \centering
    \scalebox{0.85}{
        \renewcommand{\arraystretch}{1.0}
        \begin{tabular}{@{}lc@{}cc@{}}
        \toprule
        \textbf{Method} & \textbf{\#Layers} & \textbf{nDCG@10} & \textbf{R@100}  \\
        \midrule
        DPR~\citep{karpukhin2020dense} & 12 & 22.5 & 47.7 \\
        ANCE~\citep{Xiong:2021ANCE} & 12 & 40.5 & 60.0 \\
        TAS-B~\citep{Hofstatter:2021SIGR} & 6 & 42.8 & 64.8 \\
        GenQ~\citep{thakur2021beir}  & 6 & 42.5 & 64.2 \\
        \midrule
        Our teacher~\citep{reimers-2019-sentence-bert} & 12 (220.2M) & 45.7 & 65.1 \\
        \embd & 6 (67.5M) & 44.0 & 63.5 \\
        \bottomrule
        \end{tabular}
    }
    \vspace{-3mm}
\end{table}

It is worth highlighting that DE models trained with the proposed methods (e.g., asymmetric DE with embedding matching and generation) achieve 99\% of the performance in both NQ/MSMARCO tasks with a query encoder that is 2/3rd the size of that of the teacher. Furthermore, \ even \ with 1/10th
size of the query encoder, our proposal can achieve 95-97\% of the performance. This is particularly useful for latency critical applications with minimal impact on the final performance.

Finally, we take our best student models, i.e., one trained using with additional embedding matching loss and using data augmentation from query generation, and evaluate on test sets.
We compare with various prior work and note that most prior work used considerably bigger models 
in terms of parameters, depth (12 or 24 layers), or width (upto 1024 dims).
For NQ test set results are reported in Table~\ref{tab:nq_testset}, but as MSMARCO does not have any public test set, we instead present results for the BEIR benchmark in Table~\ref{tab:beir_teacher_student_only}. 
Note we also provide evaluation of our SentenceBERT teacher achieving very high performance on the benchmark which can be of independent interest (please refer to~Appendix~\ref{appen:beir} for details).
For both NQ and BEIR, our approach obtains competitive student model with fewer than 50\% of the parameters: even with 6 layers, our student model is very close (98-99\%) to its teacher.

\subsection{CE to DE distillation}
\label{sec:exp-ce2de}

For the CE to DE distillation experiment, we converted SimLM \texttt{[CLS]}-pooled CE model\footnote{https://github.com/microsoft/unilm/tree/master/simlm} to a dual-pooled CE model via standard score-based distillation (cf.~Section~\ref{sec:ir-distill}). We subsequently utilize the resulting dual-pooled version of the SimLM CE model as a teacher to perform CE to DE distillation via embedding alignment. Similar to DE to DE distillation (cf.~Section~\ref{sec:exp-de-to-de-main}), we aim to identify the utility of various components of \embd{} in our exploration. See Table~\ref{tab:msmarco_cede} for the results. 

We also explore distilling dual-pooled CE teacher model to an asymmetric DE student model. In this setting, DE student model simply inherits the document embeddings from the CE teacher model. Crucially, the inheritance of the document embedding from the dual-pooled CE teacher model can be done offline as we feed an \textit{empty query} along with the document (separated by the \texttt{[SEP]} token) to obtain the document embedding from the dual-pooled CE teacher model.

\noindent\textbf{Results and discussion.}
The excellent performance of distillation to an asymmetric DE model (which inherits document embeddings from the dual-pooled CE model) not only showcases the power of embedding alignment via \embd{} but also highlights the effectiveness of dual-pooling method for the teacher. 

\begin{table}
    \centering
    \caption{Performance of various DE models obtained via CE to DE distillation on MSMARCO dev set for \emph{re-ranking} ({original top1000}). The teacher model is a dual-pooled version of the SimLM model which achieves MRR@10 of 40.0 nDCG@10 of 45.8.  The table shows performance of the symmetric DE student model (67.5M or 11.3M transformer as both encoders), and asymmetric DE student model (67.5M or 11.3M transformer as query encoder and document embeddings inherited from the dual-pooled teacher). Note that the document embeddings used during inheritance are generated in a query-independent manner from the CE teacher model (with \textit{empty} query).
    }
    \label{tab:msmarco_cede}
    \vspace{1mm}
    \hspace{-2mm}
    \scalebox{0.85}{
    \begin{tabular}{@{}lccccc@{}}
    \toprule
    \multirow{2}{*}{\textbf{Method}} & \multicolumn{2}{c}{\textbf{MRR@10}} && \multicolumn{2}{c}{\textbf{nDCG@10}} \\
    \cmidrule{2-3} \cmidrule{5-6}
    & \textbf{67.5M} & \textbf{11.3M} && \textbf{67.5M} & \textbf{11.3M} \\
    \midrule
    Train student directly       & 27.0 & 23.0  && 32.2 & 29.7 \\
    \; + Distill from teacher     & 33.2 & 28.6 && 38.7  & 33.6  \\
    \; + Inherit doc embeddings   & 35.4 & 30.2 && 41.0 & 35.6  \\
    \; + Query embedding matching & 36.1 & 31.7 && 41.7 & 37.1  \\
    \; + Query generation        &  {36.3} & 32.1 && {42.0} & 37.6 \\
    \midrule 
    Train student using only \\
    embedding matching and \\ 
    inherit doc embeddings & \best{36.9} & 34.7 && \best{42.6} & 40.4  \\
    \; + Query generation   & {36.8} & \best{35.1} && {42.5} & \best{40.8} \\
    \midrule
    Standard distillation\\
    from \texttt{[CLS]}-teacher    & 32.8 & 28.5 && 38.4 & 33.6  \\
    \bottomrule
    \end{tabular}}
    \vspace{-2mm}
\end{table}

\vspace{-2mm}
\section{Related work}
\label{sec:related}
\vspace{-1.5mm}
Here, we position our \embd~work with respect to prior work on distillation and data augmentation for Transformers-based IR models. {We also cover prior efforts on aligning representations during distillation for \emph{non-IR} settings. Unlike our problem setting where the DE student is factorized, these works mainly consider distilling a single large Transformer into a smaller one.} 

\noindent \textbf{Distillation for IR.}
Traditional distillation techniques have been widely applied in the IR literature, often to distill a teacher CE model to a student DE model~\citep{Li:2020Parade,Chen:2021KD}. Recently, distillation from a DE model (with complex late interaction) to another DE model (with inner-product scoring) has also been considered~\citep{Lin:2021Coupled, Hofstatter:2021SIGR}. As for distilling across different model architectures,~\citet{Lu:2020TwinBERT, Izacard:2021Distilling} consider distillation from a teacher CE model to a student DE model. \citet{Hofstatter:2020} conduct an extensive study of knowledge distillation across a wide-range of model architectures. Most existing distillation schemes for IR rely on only teacher scores;
by contrast, we propose a geometric approach that also utilizes the teacher \emph{embeddings}. Many recent efforts~\citep{Qu:2021Rocket, Ren:2021RocketV2, Santhanam:2021} show that iterative multi-stage (self-)distillation improves upon single-stage distillation~\citep{Qu:2021Rocket, Ren:2021RocketV2, Santhanam:2021}. These approaches use a model from the previous stage to obtain labels~\citep{Santhanam:2021} as well as mine harder-negatives~\citep{Xiong:2021ANCE}. We only focus on the single-stage distillation in this paper. Multi-stage procedures are complementary to our work, as one can employ our proposed embedding-matching approach in various stages of such a procedure.
Interestingly, we demonstrate in Sec.~\ref{sec:experiments} that our proposed \embd~can successfully benefit from high quality models trained with such complex procedures~\citep{reimers-2019-sentence-bert, zhang2022ar2}. 
In particular, our single-stage distillation method can transfer almost all of their performance gains to even smaller models.
Also to showcase that our method brings gain orthogonal to how teacher was trained, we conduct experiments with single-stage trained teacher in Appendix~\ref{appendix:old-teacher-results}.

\noindent \textbf{Distillation with representation alignments.}~Outside of the IR context, a few prior works proposed to utilize alignment between hidden layers during distillation~\citep{romero2014fitnets,Sanh:2019DistilBERT,Jiao:2020TinyBERT,aguilar2020knowledge,zhang2020improve}. \citet{chen2022knowledge} utilize the representation alignment to re-use teacher's classification layer for image classification. Unlike these works, our work is grounded in a rigorous theoretical understanding of the teacher-student (generalization) gap for IR models. Further, our work differs from these as it needs to address multiple challenges presented by an IR setting: 1) cross-architecture distillation such as CE to DE distillation; 2) partial representation alignment of query or document representations as opposed to aligning for the entire input, i.e., a query-documents pair; and 3) catering representation alignment approach to novel IR setups such as asymmetric DE configuration. To the best of our knowledge, our work is first in the IR literature that goes beyond simply matching scores (or its proxies) for distillation.

\vspace{-0.5mm}
\noindent \textbf{Semi-supervised learning for IR.}
\newtext{Data augmentation or semi-supervised learning has been previously used to ensure data efficiency in IR~\citep[see, e.g.,][]{MacAvaney:2019,Zhao:2021Distantly}.} \newtext{More interestingly, data augmentation have enabled performance improvements as well.} Doc2query~\citep{Nogueira:2019docTOquery,Nogueira:2019docTTTTTquery} performs document expansion by generating queries that are relevant to the document and appending those queries to the document. Query expansion has also been considered, e.g., for document re-ranking~\citep{Zheng:2020BERT-QE}. Notably, generating synthetic (query, passage, answer) triples from a text corpus to augment existing training data for QA systems also leads to significant gains~\citep{Alberti:2019Synthetic, ouguz2021domain}. 
Furthermore, even zero-shot approaches, where no labeled query-document pairs are used, can also perform competitively to supervised methods~\citep{Lee:2019, Izacard:2021Contriever, Ma:2021ZeroShot, Sachan:2022Improving}. Unlike these works, we utilize query-generation capability to ensure tighter alignment between the embedding spaces of the teacher and student.

\noindent\textbf{{Richer} transformers-based architectures for IR.}~Besides DE and CE models (cf.~Sec.~\ref{sec:bg}), intermediate configurations~\citep{MacAvaney:2020, Khattab:2020, Nie:2020, Luan:2021Sparse} have been proposed. Such models independently encode query and document before applying a more complex \emph{late interaction} between the two. \citet{Nogueira:2020Seq2Seq} explore \textit{generative} encoder-decoder style model for re-ranking. In this paper, we focus on basic DE/CE models to showcase the benefits of our proposed geometric distillation approach. Exploring embedding matching for aforementioned architectures is an interesting avenue for future work.

\vspace{-2mm}
\section{Conclusion}
\label{sec:conclusion}
\vspace{-1.5mm}
We propose \embd~--- a novel distillation method for IR that goes beyond simple score matching. En route, we provide a theoretical understanding of the teacher-student generalization gap in an IR setting which not only motivated \embd~but also inspired new design choices for the student DE models: (a) reusing the teacher's document encoder in the student and (b) aligning query embeddings of the teacher and student. This simple approach delivers consistent quality and computational gains in practical deployments and we demonstrate them on MSMARCO, NQ, and BEIR benchmarks. Finally, we found \embd{} retains 95-97\% of the teacher performance to with 1/10th size students.

\noindent{\bf Limitations.}~As discussed in Sec.~\ref{sec:cetode} and \ref{sec:exp-ce2de}, \embd~requires modifications in the CE scoring function to be effective. In terms of underlying IR model architectures, we only explore Transformer-based models in our experiments; primarily due to their widespread utilization. That said, we expect our results to extend to non-Transformer architectures such as MLPs. Finally, we note that our experiments only consider NLP domains, and exploring other modalities (e.g., vision) or multi-modal settings (e.g., image-to-text search) is left as an interesting avenue for future work.

%%%%%%%%%%%%%%%%%%%%%%%%%%%%%%%%%%%%%%%%%%%%%%%%%%%%%%%%%%%%%%%%%%%%%%%%%%%%%%%
%%%%%%%%%%%%%%%%%%%%%%%%%%%%%%%%%%%%%%%%%%%%%%%%%%%%%%%%%%%%%%%%%%%%%%%%%%%%%%%
% References
%%%%%%%%%%%%%%%%%%%%%%%%%%%%%%%%%%%%%%%%%%%%%%%%%%%%%%%%%%%%%%%%%%%%%%%%%%%%%%%
%%%%%%%%%%%%%%%%%%%%%%%%%%%%%%%%%%%%%%%%%%%%%%%%%%%%%%%%%%%%%%%%%%%%%%%%%%%%%%%
\bibliography{custom}
\bibliographystyle{plainnat}

%%%%%%%%%%%%%%%%%%%%%%%%%%%%%%%%%%%%%%%%%%%%%%%%%%%%%%%%%%%%%%%%%%%%%%%%%%%%%%%
%%%%%%%%%%%%%%%%%%%%%%%%%%%%%%%%%%%%%%%%%%%%%%%%%%%%%%%%%%%%%%%%%%%%%%%%%%%%%%%
% APPENDIX
%%%%%%%%%%%%%%%%%%%%%%%%%%%%%%%%%%%%%%%%%%%%%%%%%%%%%%%%%%%%%%%%%%%%%%%%%%%%%%%
%%%%%%%%%%%%%%%%%%%%%%%%%%%%%%%%%%%%%%%%%%%%%%%%%%%%%%%%%%%%%%%%%%%%%%%%%%%%%%%

\newpage
\appendix

\newpage 

\begin{center}
{\Large EmbedDistill: A Geometric Knowledge Distillation \\ for Information Retrieval (Supplementary)}
\end{center}

\section{Loss functions}\label{appendix:loss}
\begin{figure*}[h]
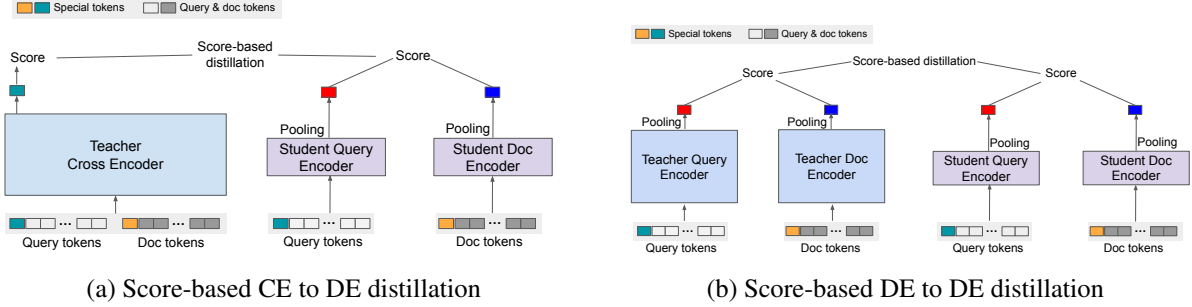

    \centering
    \begin{subfigure}{0.49\textwidth}
        \centering
        \includegraphics[page=5,width=0.98\textwidth,trim=0 6cm 8cm 0 ,clip]{figures/embeddistil-drawing.pdf}
        \caption{Score-based CE to DE distillation}
        \label{fig:ce2de-score-distill}
    \end{subfigure}
    \hfil
    \begin{subfigure}{0.49\linewidth}
        \centering
        \includegraphics[page=3,width=0.98\textwidth,trim=0 6cm 6cm 0 ,clip]{figures/embeddistil-drawing.pdf}
        \caption{Score-based DE to DE distillation}
        \label{fig:de2de-score-distill}
    \end{subfigure}
    \caption{Illustration of score-based distillation for IR (cf.~Section~\ref{sec:ir-distill}). Fig.~\ref{fig:ce2de-score-distill} describes distillation from a teacher \texttt{[CLS]}-pooled CE model to a student DE model. Fig.~\ref{fig:de2de-score-distill} depicts distillation from a teacher DE model to a student DE model. Here, both distillation setups employ symmetric DE configurations where query and document encoders of the student model are of the same size.
    }
    \label{fig:score-distill}
\end{figure*}

Here, we state various (per-example) loss functions that most commonly define training objectives for IR models. Typically, one hot training with original label is performed using \textit{softmax-based cross-entropy loss} functions:
\begin{align}
&\ell\big(s_{q, \mathbf{d}_i}, \mathbf{y}_i\big) ~ = ~ {-} \sum_{j \in [L]} y_{i,j} \cdot \log \Big( \frac{\exp({s(q_i, d_{i,j})})}{\sum\limits_{j' \in [L]} \exp({s(q_i, d_{i,j'})})}\Big).
\label{eq:ce-one-hot}
\end{align}
Alternatively, it is also common to employ a one-vs-all loss function based on \textit{binary cross-entropy loss} as follows:
\begin{align}
&\ell\big(s_{q, \mathbf{d}_i}, \mathbf{y}_i\big) ~ = ~ {-} \sum_{j \in [L]} \bigg(y_{i,j} \cdot \log \Big(\frac{1}{1~ +~ \exp({-s(q_i, d_{i,j})})}\Big) ~~ + 
\nonumber \\
& \qquad \qquad \qquad \qquad \qquad \qquad
(1 - y_{i,j}) \cdot \log \Big(\frac{1}{1 ~+~ \exp({s(q_i, d_{i,j})})}\Big)\bigg).
\label{eq:binary-ce-one-hot}
\end{align}
{Note that $\mathbf{d}_i = \{d_{i,j}\}_{j \in [L]}$ can be expanded to include various forms of negatives such as in-batch negatives~\citep{karpukhin2020dense} and sampled negatives~\citep{bengio2008sampled}.}

As for distillation (cf.~Fig.~\ref{fig:score-distill}), one can define a distillation objective based on the softmax-based cross-entropy loss as:\footnote{It is common to employ temperature scaling with softmax operation. We do not explicitly show the temperature parameter for ease of exposition.}
\begin{align}
\label{eq:ce-distill}
&\ell_{\rm d}\big(s^{\rm s}_{q, \mathbf{d}_i}, s^{\rm t}_{q, \mathbf{d}_i} \big) =  - \sum_{j \in [L]} \bigg(\frac{\exp({s^{\rm t}_{i,j}})}{\sum_{j' \in [L]} \exp({s^{\rm t}_{i,j'}})} \cdot \log \Big(\frac{\exp({s^{\rm s}_{i,j}})}{\sum_{j' \in [L]} \exp({s^{\rm s}_{i,j'}})}\Big) \bigg),  
\end{align}
where $s^{\rm t}_{i,j} := s^{\rm t}(q_i, d_{i,j})$ and $s^{\rm s}_{i,j} := s^{\rm s}(q_i, d_{i,j})$ denote the teacher and student scores, respectively. On the other hand, the distillation objective with the binary cross-entropy takes the form:
\begin{align}
\ell_{\rm d}\big(s^{\rm s}_{q, \mathbf{d}_i}, s^{\rm t}_{q, \mathbf{d}_i} \big) ~ & = ~ {-} \sum_{j \in [L]} \bigg(\frac{1}{1~ +~ \exp({-s^{\rm t}_{i,j}})} \cdot \log \Big(\frac{1}{1~ +~ \exp({-s^{\rm s}_{i,j}})}\Big) ~ ~~+  \nonumber \\
& \qquad \qquad \qquad \qquad
\frac{1}{1 ~+~ \exp({s^{\rm t}_{i,j}})} \cdot \log \Big(\frac{1}{1 ~+~ \exp({s^{\rm s}_{i,j}})}\Big)\bigg).
\label{eq:binary-ce-distill}
\end{align}
Finally, distillation based on the meas square error (MSE) loss (aka. logit matching) employs the following loss function:
\begin{align}
\label{eq:mse-distill}
\ell_{\rm d}\big(s^{\rm s}_{q, \mathbf{d}_i}, s^{\rm t}_{q, \mathbf{d}_i} \big) 
= \sum\limits_{j \in [L]} \big( s^{\rm t}(q_i, d_{i,j}) - s^{\rm s}(q_i, d_{i,j})\big)^2.
\end{align}

\section{Dual pooling details}\label{appendix:pooling}

In this work, we focus on two kinds of dual pooling strategies:
\begin{itemize}[leftmargin=5mm, itemsep=2mm, partopsep=0pt,parsep=0pt]
\item \textbf{Special tokens-based dual pooling.}~Let ${\rm pool}_{{\rm CLS}}$ and ${\rm pool}_{{\rm SEP}}$ denote the pooling operations that return the embeddings of the \texttt{[CLS]} and \texttt{[SEP]} tokens, respectively. We define
\begin{align}
    {\tt emb}^{\rm t}_{q \leftarrow (q,d)} = {\rm pool}_{{\tt CLS}}\big({\rm Enc}^{\rm t}(\tilde{o}) \big), \nonumber \\
    {\tt emb}^{\rm t}_{d \leftarrow (q,d)} = {\rm pool}_{{\tt SEP}}\big({\rm Enc}^{\rm t}(\tilde{o}) \big), \label{eq:special-dual-pooling}
\end{align}
where $\tilde{o}$ denotes the input token sequence to the Transformers-based encoder, which consists of 
\{ query, document, special \} tokens.
\item \textbf{Segment-based weighted-mean dual pooling.}~Let ${\rm Enc}^{\rm t}(\tilde{o})\vert_{Q}$ and ${\rm Enc}^{\rm t}(\tilde{o})\vert_{D}$ denote the final query token embeddings and document token embeddings produced by the encoder, respectively. We define the \emph{proxy} query and document embeddings
\begin{align}
    {\tt emb}^{\rm t}_{q \leftarrow (q,d)} = {\rm mean_{wt}}\big({\rm Enc}^{\rm t}(\tilde{o})\vert_{Q} \big), \nonumber \\
    {\tt emb}^{\rm t}_{d \leftarrow (q,d)} = {\rm mean_{wt}}\big({\rm Enc}^{\rm t}(\tilde{o})\vert_{D} \big),\label{eq:wm-dual-pooling}
\end{align}
where ${\rm mean_{wt}}(\cdot)$ denotes the weighted mean operation. We employ the specific weighting scheme where each token receives a weight equal to the inverse of the square root of the token-sequence length.
\end{itemize}

\section{Deferred details and proofs from Section~\ref{sec:theory}}
\label{appen:theory}

In this section we present more precise statements and proofs of Theorem~\ref{thm:teacher-student-gap} and Proposition~\ref{prop:lower_complexity_with_frozen_docencoder-main} (stated informally in Section~\ref{sec:theory} of the main text) along with the necessary background. First, for the ease of exposition, we define new notation which will
facilitate theoretical analysis in this section. % and Section~\ref{appen:unif_conv_bd}.

\textbf{Notation.}~Denote the query and document encoders as $f\colon\QCal\to\mathbb{R}^{k}$
and $g\colon\DCal \to\mathbb{R}^{k}$ for the student, and $F\colon\QCal \to\mathbb{R}^{k},G\colon \DCal\to\mathbb{R}^{k}$
for the teacher (in the dual-encoder setting).
With $q$ denoting a query and $d$ denoting a document, $f(q)$ and $g(d)$ then denote
query and document embeddings, respectively, generated by the student. We define $F(q)$ and $G(d)$ similarly for
embeddings by the teacher.\footnote{Note that, as per the notations in the main text, we have $(f, g) = ({\rm Enc}^{\rm s}_{Q}, {\rm Enc}^{\rm s}_{D})$ and $(F, G) = ({\rm Enc}^{\rm t}_{Q}, {\rm Enc}^{\rm t}_{D})$. Similarly, we have $({\tt emb}^{\rm t}_q, {\tt emb}^{\rm t}_d) = (f(q), g(d))$ and $({\tt emb}^{\rm t}_q, {\tt emb}^{\rm t}_d) = (F(q), G(d))$.}

\begin{theorem}[Formal statement of Theorem~\ref{thm:teacher-student-gap}]
\label{thm:teacher-student-gap_appn}
Let $\FCal$ and $\GCal$ denote the function classes for the query and document encoders for the student model, respectively. 
Given $n$ examples $\SCal_n = \{(q_i, d_i, y_i)\}_{i \in [n]} \subset \mathscr{Q} \times \mathscr{D} \times \{0,1\}$,
let $s^{\rm s}(q,d) := s^{f,g}(q_i,d_i) = f(q_i)^Tg(d_i)$ be the scores assigned to the $(q_i,d_i)$ pair by a dual-encoder model with $f \in \FCal$ and $g \in \GCal$ as query and document encoders, respectively. 
Let $\ell$ and $\ell_{\rm d}$ be the binary cross-entropy loss (cf.~Eq.~\eqref{eq:binary-ce-one-hot} with $L=1$) and the distillation-specific loss based on it (cf.~Eq.~\eqref{eq:binary-ce-distill} with $L=1$), respectively. In particular,
\begin{align*}
\ell(s^{F,G}(q_i,d_i), y_i)&:=-y_i\log\sigma\left(F(q_i)^{\top}G(d_i)\right)-(1-y_i)\log\left[1-\sigma\left(F(q_i)^{\top}G(d_i)\right)\right] \nonumber \\
\ell_{\rm d}(s^{f,g}(q_i,d_i),s^{F,G}(q_i,d_i))&:=-\sigma\left(F(q_i)^{\top}G(d_i)\right)\cdot \log\sigma\left(f(q_i)^{\top}g(d_i)\right) \;- \nonumber \\
&\qquad \qquad [1- \sigma\left(F(q_i)^{\top}G(d_i)\right)]\cdot \log\left[1-\sigma\left(f(q_i)^{\top}g(d_i)\right)\right],
\end{align*}
where $\sigma$ is the sigmoid function and $s^{\rm t} := s^{F,G}$ denotes the teacher dual-encoder model with $F$ and $Q$ as its query and document encoders, respectively. 
Assume that 
\begin{enumerate}
\item All encoders $f,g,F,$ and $G$ have the same output dimension. % $k\ge1$.
\item $\exists~K \in(0,\infty)$ such that $\sup_{q\in\mathcal{Q}}\max\left\{ \|f(q)\|_{2},\|F(q)\|_{2}\right\} \le K$
and 
$\sup_{d\in\mathcal{D}}\max\left\{ \|g(d)\|_{2},\|G(d)\|_{2}\right\} \le K$.
\end{enumerate}
Then, we have
\begin{align}
&\underbrace{\mathbb{E}\left[s^{f,g}(q,d)\right]}_{:= R(s^{\rm s}) = R(s^{f, g})} - \underbrace{\mathbb{E}\left[s^{F, G}(q, d)\right]}_{:= R(s^{\rm t}) = R(s^{F, G})} \le \underbrace{\sup_{(f,g) \in \FCal \times \GCal} \left|R(s^{f,g}, s^{F, G}; \SCal_n) - \mathbb{E}\left[\ell_{\rm d} \big( s^{f,g}(q,d), s^{F, G}(q,d)\big)\right]\right|}_{:=\ECal_n(\FCal, \GCal)} \; \nonumber \\ 
&\qquad\qquad \quad + 2K\Big(\underbrace{\frac{1}{n}\sum_{i \in [n]}\|g(d_i)-G(d_i)\|_{2}}_{:=R_{{\rm Emb}, D}({\rm t}, {\rm s}; \SCal_n)} \; + 
 \underbrace{\frac{1}{n}\sum_{i\in[n]}\|f(q_i)-F(q_i)\|_{2}\Big)}_{:=R_{{\rm Emb}, Q}({\rm t}, {\rm s}; \SCal_n)} \; + \underbrace{R(s^{F, G}; \SCal_n) - R(s^{F, G})}_{:=\Delta(s^{\rm t}; \SCal_n)}  \nonumber \\
&\qquad \qquad \quad + K^2\Big(\mathbb{E}\left[\left|\sigma(F(q)^{\top}G(d))-y\right|\right] + \frac{1}{n}\sum_{i \in [n]}\left|\sigma\left(F(q_i)^{\top}G(d_i)\right)-y_i\right|\Big).
\end{align}
\end{theorem}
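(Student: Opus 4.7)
The strategy is to write the gap $R(s^{f,g})-R(s^{F,G})$ as a telescoping sum whose individual pieces are exactly the five quantities on the right-hand side, and then bound each piece. Concretely, I would insert the population distillation risk $\mathbb{E}\ell_{\rm d}(s^{f,g},s^{F,G})$, its empirical counterpart $R(s^{f,g},s^{F,G};\SCal_n)$, and the empirical teacher risk $R(s^{F,G};\SCal_n)$, giving
\begin{align*}
R(s^{f,g})-R(s^{F,G})
&= \underbrace{\bigl(R(s^{f,g})-\mathbb{E}\ell_{\rm d}(s^{f,g},s^{F,G})\bigr)}_{\text{(I)}}
 + \underbrace{\bigl(\mathbb{E}\ell_{\rm d}(s^{f,g},s^{F,G})-R(s^{f,g},s^{F,G};\SCal_n)\bigr)}_{\text{(II)}}\\
&\quad + \underbrace{\bigl(R(s^{f,g},s^{F,G};\SCal_n)-R(s^{F,G};\SCal_n)\bigr)}_{\text{(III)}}
 + \underbrace{\bigl(R(s^{F,G};\SCal_n)-R(s^{F,G})\bigr)}_{\text{(IV)}}.
\end{align*}
Term (II) is controlled by $\ECal_n(\FCal,\GCal)$ after taking the supremum over $(f,g)$, and term (IV) is $\Delta(s^{\rm t};\SCal_n)$ by definition. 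No work is needed for these.

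The algebraic engine driving the remaining two bounds is the identity
\[
\ell\bigl(s(q,d),y\bigr)-\ell_{\rm d}\bigl(s(q,d),s^{F,G}(q,d)\bigr) \;=\; -\bigl(y-\sigma(s^{F,G}(q,d))\bigr)\, s(q,d),
\]
which follows because both losses share the form $-\alpha\log\sigma(s)-(1-\alpha)\log(1-\sigma(s))=-\alpha s+\log(1+e^s)$, so the $\log(1+e^s)$ pieces cancel and only a linear residual in $s$ survives. Applying this identity with $s=s^{f,g}$ turns term (I) into $-\mathbb{E}[(y-\sigma(s^{F,G}))\,s^{f,g}(q,d)]$, and since $|s^{f,g}(q,d)|=|f(q)^\top g(d)|\le K^2$ by Cauchy--Schwarz together with the embedding-norm assumption, we obtain $|\text{(I)}|\le K^2\,\mathbb{E}|\sigma(s^{F,G})-y|$, the first of the two teacher-quality terms.

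For term (III), I would apply the same identity to the teacher pair $(s^{F,G},y)$ and split
\[
\text{(III)} \;=\; \frac{1}{n}\sum_{i\in[n]}\bigl[\ell_{\rm d}(s^{f,g}_i,s^{F,G}_i)-\ell_{\rm d}(s^{F,G}_i,s^{F,G}_i)\bigr] \;+\; \frac{1}{n}\sum_{i\in[n]}\bigl(y_i-\sigma(s^{F,G}_i)\bigr)\, s^{F,G}_i,
\]
whose second sum is bounded by $K^2\cdot\frac{1}{n}\sum_i|\sigma(s^{F,G}_i)-y_i|$ by the same bilinear-score argument as for (I). For the first sum, I would use the fact that $\ell_{\rm d}(\cdot,s')$ is Lipschitz in its first argument, since $\partial_s\ell_{\rm d}(s,s')=\sigma(s)-\sigma(s')$ is bounded, so each summand is controlled by a constant multiple of $|s^{f,g}(q_i,d_i)-s^{F,G}(q_i,d_i)|$. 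The final key step is the bilinear decomposition
\[
s^{f,g}(q,d)-s^{F,G}(q,d) \;=\; \bigl(f(q)-F(q)\bigr)^\top g(d) + F(q)^\top\bigl(g(d)-G(d)\bigr),
\]
and Cauchy--Schwarz with the $K$-bound on embedding norms yields the per-example estimate $K\|f(q_i)-F(q_i)\|+K\|g(d_i)-G(d_i)\|$, averaging to the $2K\,R_{{\rm Emb},Q}+2K\,R_{{\rm Emb},D}$ contribution (the factor of $2$ absorbing the Lipschitz constant).

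The step I expect to be the main obstacle is spotting the binary-cross-entropy identity cleanly, so that both (I) and the $\ell_{\rm d}(s^{F,G},s^{F,G})-\ell(s^{F,G},y)$ gap inside (III) reduce to tractable linear-in-score residuals rather than to a KL-type expression that would be harder to marry with the bilinear embedding decomposition; once that identity is in place, the rest is telescoping plus Cauchy--Schwarz.
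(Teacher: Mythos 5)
Your proposal is correct and follows essentially the same route as the paper: the identical four-term telescoping decomposition, the same softplus/linear-residual identity for the binary cross-entropy losses to handle term (I) and the teacher self-distillation piece of term (III), and Cauchy--Schwarz with the norm bound $K$ to extract the embedding-matching terms. The only (harmless) difference is in the middle of term (III): the paper swaps one encoder at a time through the hybrid scorer $s^{f,G}$ and bounds the linear and softplus pieces separately (giving the constant $2K$), whereas you bound $\ell_{\rm d}$ directly via its $1$-Lipschitzness in the first argument together with the bilinear decomposition of $s^{f,g}-s^{F,G}$, which is valid and in fact yields the slightly sharper constant $K$, subsumed by the stated bound.
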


\begin{proof}
Note that
\begingroup
\allowdisplaybreaks
\begin{align}
&R(s^{f,g}) - R(s^{F, G}) = R(s^{f,g}) - R(s^{f,g}, s^{F,G}) + R(s^{f,g}, s^{F,G}) - R(s^{F, G}) \nonumber \\
&\qquad \overset{(a)}{\leq} K^2\mathbb{E}\left[\left|\sigma(F(q)^{\top}G(d))-y\right|\right] + R(s^{f,g}, s^{F,G}) - R(s^{F, G}) \nonumber \\
&\qquad = K^2\mathbb{E}\left[\left|\sigma(F(q)^{\top}G(d))-y\right|\right] + R(s^{f,g}, s^{F,G}) - R(s^{f,g}, s^{F,G}; \SCal_n)  \; + \nonumber \\
&\qquad \qquad R(s^{f,g}, s^{F,G}; \SCal_n)  - R(s^{F, G}) \nonumber \\
&\qquad \overset{(b)}{\leq} K^2\mathbb{E}\left[\left|\sigma(F(q)^{\top}G(d))-y\right|\right] + \ECal_{n}(\FCal, \GCal) + R(s^{f,g}, s^{F,G}; \SCal_n)  - R(s^{F, G}) \nonumber \\
&\qquad =  K^2\mathbb{E}\left[\left|\sigma(F(q)^{\top}G(d))-y\right|\right] + \ECal_{n}(\FCal, \GCal) + R(s^{f,g}, s^{F,G}; \SCal_n) - R(s^{F, G}; \SCal_n)  \; + \nonumber \\
&\qquad \qquad R(s^{F, G};\SCal_n) - R(s^{F, G}) \nonumber \\
&\qquad \overset{(c)}{\leq}  K^2\mathbb{E}\left[\left|\sigma(F(q)^{\top}G(d))-y\right|\right] + \ECal_{n}(\FCal, \GCal) + \underbrace{R(s^{F, G};\SCal_n) - R(s^{F, G})}_{:=\Delta(s^{\rm t}; \SCal_n)} \; + \nonumber \\
&\qquad \qquad \frac{2K}{n}\sum_{i \in [n]}\|g(d_i)-G(d_i)\|_{2} \; + 
 \frac{2K}{n}\sum_{i\in[n]}\|f(q_i)-F(q_i)\|_{2} \; + \nonumber \\
&\qquad \qquad \frac{K^2}{n}\sum_{i \in [n]}\left|\sigma\left(F(q_i)^{\top}G(d_i)\right)-y_i\right| 
\end{align}
\endgroup
where $(a)$ follows from Lemma~\ref{lem:bin_ce_distill_vs_onehot_student}, $(b)$ follows from the definition of $\ECal_n(\FCal, \GCal)$, and $(c)$ follows from Proposition~\ref{prop:bin_ce_risk_bound_n}.
\end{proof}

\subsection{Bounding the difference between student's empirical \emph{distillation} risk and teacher's empirical risk}
\label{sec:bin_ce_risk}

\begin{lemma}
\label{prop:bin_ce_risk_bound_n}
Given $n$ examples $\SCal_n = \{(q_i, d_i, y_i)\}_{i \in [n]} \subset \mathscr{Q} \times \mathscr{D} \times \{0,1\}$,
let $s^{f,g}(q_i,d_i) = f(q_i)^Tg(d_i)$ be the scores assigned to the $(q_i,d_i)$ pair by a dual-encoder model with $f$ and $g$ as query and document encoders, respectively.
Let $\ell$ and $\ell_{\rm d}$ be the binary cross-entropy loss (cf.~Eq.~\eqref{eq:binary-ce-one-hot} with $L=1$) and the distillation-specific loss based on it (cf.~Eq.~\eqref{eq:binary-ce-distill} with $L=1$), respectively. In particular,
\begin{align*}
\ell(s^{F,G}(q_i,d_i), y_i)&:=-y_i\log\sigma\left(F(q_i)^{\top}G(d_i)\right)-(1-y_i)\log\left[1-\sigma\left(F(q_i)^{\top}G(d_i)\right)\right] \nonumber \\
\ell_{\rm d}(s^{f,g}(q_i,d_i),s^{F,G}(q_i,d_i))&:=-\sigma\left(F(q_i)^{\top}G(d_i)\right)\cdot \log\sigma\left(f(q_i)^{\top}g(d_i)\right) \;- \nonumber \\
&\qquad \qquad [1- \sigma\left(F(q_i)^{\top}G(d_i)\right)]\cdot \log\left[1-\sigma\left(f(q_i)^{\top}g(d_i)\right)\right],
\end{align*}
where $\sigma$ is the sigmoid function and $s^{F,G}$ denotes the teacher dual-encoder model with $F$ and $Q$ as its query and document encoders, respectively. 
Assume that 
\begin{enumerate}
\item All encoders $f,g,F,$ and $G$ have the same output dimension $k\ge1$.
\item $\exists~K \in(0,\infty)$ such that $\sup_{q\in\mathcal{Q}}\max\left\{ \|f(q)\|_{2},\|F(q)\|_{2}\right\} \le K$
and 
$\sup_{d\in\mathcal{D}}\max\left\{ \|g(d)\|_{2},\|G(d)\|_{2}\right\} \le K$.
\end{enumerate}
Then, we have
\begin{align}
\label{eq:s-dist-vs-t-onehot_n}
&\frac{1}{n}\sum_{i \in [n]}\ell_{\rm d}\big(s^{f,g}(q_i,d_i),s^{F,G}(q_i,d_i)\big) -\frac{1}{n}\sum_{i \in [n]}\ell\big(s^{F,G}(q_i,d_i), y_i\big)  \le \; \nonumber \\ 
&\qquad\qquad \qquad \qquad \frac{2K}{n}\sum_{i \in [n]}\|g(d_i)-G(d_i)\|_{2} \; + 
 \frac{2K}{n}\sum_{i\in[n]}\|f(q_i)-F(q_i)\|_{2}
 \; + \nonumber \\
&\qquad \qquad \qquad \qquad \qquad
\frac{K^2}{n}\sum_{i \in [n]}\left|\sigma\left(F(q_i)^{\top}G(d_i)\right)-y_i\right|.
\end{align}
\end{lemma}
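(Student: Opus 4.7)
The plan is to write the per-example difference as a telescoping sum by inserting the intermediate quantity $\ell(s^{f,g}(q_i,d_i), y_i)$, which is the hard-label BCE loss evaluated at the \emph{student} score. That is, for each $i$ we split
\begin{equation*}
\ell_{\rm d}(s^{f,g}, s^{F,G}) - \ell(s^{F,G}, y) \;=\; \underbrace{\bigl[\ell_{\rm d}(s^{f,g}, s^{F,G}) - \ell(s^{f,g}, y)\bigr]}_{T_1(i)} \;+\; \underbrace{\bigl[\ell(s^{f,g}, y) - \ell(s^{F,G}, y)\bigr]}_{T_2(i)},
\end{equation*}
and bound $T_1$ and $T_2$ separately. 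The arguments on the right-hand side of the lemma then correspond term-wise: $T_1$ will produce the $K^{2}\,|\sigma(F^{\top}G) - y|$ piece, while $T_2$ will produce the two embedding-distance pieces.

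For $T_1(i)$, since only the ``label'' changes (teacher soft-label $\sigma(F^{\top}G)$ versus hard label $y$) while the logit $s^{f,g}$ is held fixed, the BCE formulas collapse cleanly. Using the identity $\log\sigma(s) - \log(1-\sigma(s)) = s$, direct algebra gives
\begin{equation*}
T_1(i) \;=\; -\bigl[\sigma\!\left(F(q_i)^{\top}G(d_i)\right) - y_i\bigr]\cdot s^{f,g}(q_i,d_i).
\end{equation*}
Cauchy--Schwarz and the uniform norm bound $\|f(q_i)\|_2,\|g(d_i)\|_2 \le K$ yield $|s^{f,g}(q_i,d_i)| \le K^{2}$, so $|T_1(i)| \le K^{2}\,|\sigma(F(q_i)^{\top}G(d_i)) - y_i|$. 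I expect this algebraic simplification to be the one conceptually novel step; everything else is mechanical.

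For $T_2(i)$, the ``label'' $y_i \in \{0,1\}$ is fixed and only the logit changes. Since $\partial_s \ell(s, y) = \sigma(s) - y \in [-1,1]$, the map $s \mapsto \ell(s, y)$ is $1$-Lipschitz, giving $T_2(i) \le |s^{f,g}(q_i,d_i) - s^{F,G}(q_i,d_i)|$. To control this inner-product difference, I would add and subtract $F(q_i)^{\top}g(d_i)$ (or symmetrically $f(q_i)^{\top}G(d_i)$):
\begin{equation*}
f(q_i)^{\top}g(d_i) - F(q_i)^{\top}G(d_i) \;=\; \bigl(f(q_i)-F(q_i)\bigr)^{\top} g(d_i) + F(q_i)^{\top}\bigl(g(d_i)-G(d_i)\bigr),
\end{equation*}
and apply Cauchy--Schwarz with the norm bound to get $|T_2(i)| \le K\,\|f(q_i)-F(q_i)\|_2 + K\,\|g(d_i)-G(d_i)\|_2$; the slightly looser constant $2K$ claimed in the lemma then follows trivially (in fact the tighter constant $K$ suffices).

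The final step is to combine the per-example bounds $|T_1(i)| + |T_2(i)|$, sum over $i \in [n]$, and divide by $n$, which directly produces the right-hand side of Eq.~\eqref{eq:s-dist-vs-t-onehot_n}. The key conceptual step is the identity for $T_1$; the $T_2$ estimate and the assembly are standard.
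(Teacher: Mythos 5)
Your proof is correct, but it follows a genuinely different decomposition from the paper's. The paper rewrites both losses using the softplus function $\gamma(v)=\log(1+e^v)$ and telescopes through the \emph{scores}: it moves the distillation loss from $(f,g)$ to $(f,G)$ to $(F,G)$ (picking up the two embedding terms, each with constant $2K$ via Lipschitzness of $\gamma$ plus the linear piece), and finally compares the distillation loss to the hard-label loss at the \emph{teacher} score, yielding $(y_i-\sigma(F^\top G))F^\top G \le K^2|\sigma(F^\top G)-y_i|$. You instead telescope through the \emph{label}: your $T_1$ compares soft versus hard labels at the \emph{student} score, where the clean identity $T_1=-(\sigma(F^\top G)-y)\,f^\top g$ (using $\log\sigma(s)-\log(1-\sigma(s))=s$) gives the $K^2$ term, and your $T_2$ moves the hard-label loss from the student to the teacher score using the $1$-Lipschitzness of $s\mapsto\ell(s,y)$ (since $\partial_s\ell=\sigma(s)-y$) together with the add-and-subtract $f^\top g-F^\top G=(f-F)^\top g+F^\top(g-G)$ and Cauchy--Schwarz. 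Your route is slightly cleaner and in fact yields the tighter constant $K$ on the embedding-distance terms where the paper gets $2K$, so it certainly implies the stated bound; the paper's route has the minor structural advantage that the embedding terms emerge directly inside the distillation loss (matching the form of the embedding-matching objectives) and that the teacher-quality term is evaluated at the teacher score. Both arguments are valid and of comparable length.
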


\begin{proof}
We first note that the distillation loss can be rewritten as 
\begin{align*}
\ell_{\rm d}\big(s^{f,g}(q,d),s^{F,G}(q,d)\big) & =\left(1-\sigma(F(q)^{\top}G(d)\right)f(q)^{\top}g(d)+\gamma(-f(q)^{\top}g(d)),
\end{align*}
where $\gamma(v):=\log[1+e^{v}]$ is the softplus function. Similarly,
the one-hot (label-dependent) loss can be rewritten as
\begin{align*}
\ell\big(s^{F,G}(q,d), y\big) & =(1-y)F(q)^{\top}G(d)+\gamma(-F(q)^{\top}G(d)).
\end{align*}
Recall from our notation in Section~\ref{sec:bg} that
\begin{align}
R(s^{f,g}, s^{F, G}; \SCal_n) & :=\frac{1}{n}\sum_{i \in [n]}\ell_{\rm d}\big(s^{f,g}(q_i,d_i),s^{F,G}(q_i,d_i)\big), \label{eq:tildeRfg_n}\\
R(s^{F,G}; \SCal_n) & := \frac{1}{n}\sum_{i \in [n]}\ell\big(s^{F,G}(q_i,d_i), y_i\big), \label{eq:RFG_n}
\end{align}
as the empirical risk based on the distillation loss, and the empirical
risk based on the label-dependent loss, respectively. With this notation,
the quantity to upper bound can be rewritten as 
\begin{align}
&R(s^{f,g}, s^{F, G}; \SCal_n) - R(s^{F,G}; \SCal_n) = \underbrace{R(s^{f,g}, , s^{F, G}; \SCal_n)-R(s^{f,G}, s^{F, G}; \SCal_n)}_{:=\square_{1}}+ \nonumber \\
&\qquad \qquad \quad \underbrace{R(s^{f,G}, s^{F, G}; \SCal_n)- R(s^{F,G}, s^{F, G}; \SCal_n)}_{:=\square_{2}}+\underbrace{R(s^{F,G}, s^{F, G}; \SCal_n)-R(s^{F,G}; \SCal_n)}_{:=\square_{3}}.\label{eq:bce_risk_diff_n}
\end{align}
We start by bounding $\square_{1}$ as
\begingroup
\allowdisplaybreaks
\begin{align}
\square_{1} 
 & =\frac{1}{n}\sum_{i \in [n]}\Big(\ell_{\rm d}\big(s^{f,g}(q_i,d_i),s^{F,G}(q_i,d_i)\big)- \ell_{\rm d}\big(s^{f,G}(q_i,d_i),s^{F,G}(q_i,d_i)\big)\Big)\nonumber \\
 & =\frac{1}{n}\sum_{i \in [n]}\Big(\left(1-\sigma(F(q_i)^{\top}G(d_i))\right)f(q_i)^{\top}g(d_i)+\gamma(-f(q_i)^{\top}g(d_i)) \nonumber \\
 & \qquad \qquad -\left(1-\sigma(F(q_i)^{\top}G(d_i))\right)f(q_i)^{\top}G(d_i) - \gamma(-f(q_i)^{\top}G(d_i))\Big)\nonumber \\
 & =\frac{1}{n}\sum_{i \in [n]}\Big(f(q_i)^{\top}\big(g(d_i)-G(d_i)\big)\left(1-\sigma(F(q_i)^{\top}G(d_i))\right) \;  \nonumber \\ &\qquad \qquad  + \gamma(-f(q_i)^{\top}g(d_i))-\gamma(-f(q_i)^{\top}G(d_i))\Big)\nonumber \\
 & \stackrel{(a)}{\le}\frac{1}{n}\sum_{i \in [n]}\Big(f(q_i)^{\top}\big(g(d_i)-G(d_i)\big)\left(1-\sigma(F(q_i)^{\top}G(d_i))\right)+\left|f(q_i)^{\top}g(d_i)-f(q_i)^{\top}G(d_i)\right|\Big)\nonumber \\
 & \stackrel{(b)}{\le}\frac{1}{n}\sum_{i \in [n]}\Big(\|f(q_i)\|\|g(d_i)-G(d_i)\|\left(1-\sigma(F(q_i)^{\top}G(d_i))\right)+\|f(q_i)\|\|g(d_i)-G(d_i)\|\Big)\nonumber \\
 & \stackrel{}{\le}\frac{K}{n}\sum_{i \in [n]}\|g(d_i)-G(d_i)\|_{2}\left(2-\sigma(F(q_i)^{\top}G(d_i))\right)\Big)\nonumber \\
 & \le\frac{2K}{n}\sum_{i \in [n]}\|g(d_i)-G(d_i)\|_{2},\label{eq:bce_diff1_n}
\end{align}
\endgroup
where at $(a)$ we use the fact that $\gamma$ is a Lipschitz continuous
function with Lipschitz constant 1, and at $(b)$ we use Cauchy-Schwarz
inequality.

Similarly for $\square_{2}$, we proceed as 
\begingroup
\allowdisplaybreaks
\begin{align}
\square_{2}  &= \frac{1}{n}\sum_{i \in [n]}\Big(\ell_{\rm d}\big(s^{f,G}(q_i,d_i),s^{F,G}(q_i,d_i)\big)- \ell_{\rm d}\big(s^{F,G}(q_i,d_i),s^{F,G}(q_i,d_i)\big)\Big)\nonumber \\
 & =\frac{1}{n}\sum_{i \in [n]}\Big(\left(1-\sigma(F(q_i)^{\top}G(d_i))\right)f(q_i)^{\top}G(d_i)+\gamma(-f(q_i)^{\top}G(d_i))\nonumber \\
 & \qquad \qquad -\left(1-\sigma(F(q_i)^{\top}G(d_i))\right)F(q_i)^{\top}G(d_i) - \gamma(-F(q_i)^{\top}G(d_i))\Big)\nonumber \\
 & =\frac{1}{n}\sum_{i \in [n]}\Big(G(d_i)^{\top}(f(q_i)-F(q_i))\left(1-\sigma(F(q_i)^{\top}G(d_i))\right) \;  \nonumber \\ &\qquad \qquad  + \gamma(-f(q_i)^{\top}G(d_i))-\gamma(-F(q_i)^{\top}G(d_i))\Big)\nonumber \\
 & \le\frac{1}{n}\sum_{i \in [n]}\Big(\|G(d_i)\|\|f(q_i)-F(q_i)\|+\left|f(q_i)^{\top}G(d_i)-F(q_i)^{\top}G(d_i)\right|\Big) \nonumber \\
 & \le\frac{2K}{n}\sum_{i \in [n]}\|f(q_i)-F(q_i)\|_{2}.\label{eq:bce_diff2_n}
\end{align}
\endgroup
$\square_{3}$ can be bounded as
\begingroup
\allowdisplaybreaks
\begin{align}
\square_{3} & = R(s^{F,G}, s^{F, G}; \SCal_n)-R(s^{F,G}; \SCal_n)\nonumber \\
 & =\frac{1}{n}\sum_{i \in [n]}\Big(\ell_{\rm d}\big(s^{F,G}(q_i,d_i),s^{F,G}(q_i,d_i)\big) -\ell\big(s^{F,G}(q_i,d_i), y_i\big)\Big)\nonumber \\
 & =\frac{1}{n}\sum_{i \in [n]}\Big(\left(1-\sigma(F(q_i)^{\top}G(d_i))\right)F(q_i)^{\top}G(d_i)+\gamma(-F(q_i)^{\top}G(d_i))\nonumber \\
 & \qquad \qquad -(1-y_i)F(q_i)^{\top}G(d_i) - \gamma(-F(q_i)^{\top}G(d_i))\Big)\nonumber \\
 & =\frac{1}{n}\sum_{i \in [n]}\Big(\big( 1-\sigma(F(q_i)^{\top}G(d_i))-(1-y_i)\big) F(q_i)^{\top}G(d_i)\Big)\nonumber \\
 & \le\frac{K^2}{n}\sum_{i \in [n]}\left|\sigma(F(q_i)^{\top}G(d_i))-y_i\right|.\label{eq:bce_diff3_n}
\end{align}
\endgroup
Combining Eq.~\ref{eq:bce_risk_diff_n}, \ref{eq:bce_diff1_n}, \ref{eq:bce_diff2_n},
and \ref{eq:bce_diff3_n} establishes the bound in Eq.~\ref{eq:s-dist-vs-t-onehot_n}.
\end{proof}

\begin{lemma}
\label{lem:bin_ce_distill_vs_onehot_student}
Given an example $(q, d, y) \in \mathscr{Q} \times \mathscr{D} \times \{0,1\}$,  
let $s^{f,g}(q,d) = f(q)^Tg(d)$ be the scores assigned to the $(q,d)$ pair by a dual-encoder model with $f$ and $g$ as query and document encoders, respectively. Let $\ell$ and $\ell_{\rm d}$ be the binary cross-entropy loss (cf.~Eq.~\eqref{eq:binary-ce-one-hot} with $L=1$) and the distillation-specific loss based on it (cf.~Eq.~\eqref{eq:binary-ce-distill} with $L=1$), respectively. In particular,
\begin{align*}
\ell(s^{f,g}(q,d), y)&:=-y\log\sigma\left(f(q)^{\top}g(d)\right)-(1-y)\log\left[1-\sigma\left(f(q)^{\top}g(d)\right)\right] \nonumber \\
\ell_{\rm d}(s^{f,g}(q,d),s^{F,G}(q,d))&:=-\sigma\left(F(q)^{\top}G(d)\right)\cdot \log\sigma\left(f(q)^{\top}g(d)\right) \;- \nonumber \\
&\qquad \qquad [1- \sigma\left(F(q)^{\top}G(d)\right)]\cdot \log\left[1-\sigma\left(f(q)^{\top}g(d)\right)\right],
\end{align*}
where $\sigma$ is the sigmoid function and $s^{F,G}$ denotes the teacher dual-encoder model with $F$ and $Q$ as its query and document encoders, respectively.
Assume that 
\begin{enumerate}
\item All encoders $f,g,F,$ and $G$ have the same output dimension $k\ge1$.
\item $\exists~K \in(0,\infty)$ such that $\sup_{q\in\mathcal{Q}}\max\left\{ \|f(q)\|_{2},\|F(q)\|_{2}\right\} \le K$
and 
$\sup_{d\in\mathcal{D}}\max\left\{ \|g(d)\|_{2},\|G(d)\|_{2}\right\} \le K$.
\end{enumerate}
Then, we have
\begin{align}
\label{eq:s-onehot-vs-t-onehot}
&\underbrace{\mathbb{E}\big[\ell\big(s^{f,g}(q,d), y\big)\big]}_{:=R(s^{f,g})} - \underbrace{\mathbb{E}\left[\ell_{\rm d}\big(s^{f,g}(q,d),s^{F,G}(q,d)\big)\right]}_{:=R(s^{f,g}, s^{F, G})} 
\le \; K_{Q}K_{D}\mathbb{E}\left[\left|\sigma(F(q)^{\top}G(d))-y\right|\right] 
\end{align}
where expectation are defined by a joint distribution $\mathbb{P}(q, d, y)$ over $ \mathscr{Q} \times \mathscr{D} \times \{0,1\}$
\end{lemma}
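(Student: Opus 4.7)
My plan is to reduce the expected inequality to a clean pointwise identity and then apply Cauchy--Schwarz with the norm bound. For a fixed triple $(q,d,y)$, I first expand the two BCE-style loss functions and compute the pointwise difference
\[
\ell\bigl(s^{f,g}(q,d), y\bigr) \;-\; \ell_{\rm d}\bigl(s^{f,g}(q,d), s^{F,G}(q,d)\bigr).
\]
Collecting the coefficients of $\log\sigma(f(q)^{\top}g(d))$ and of $\log[1-\sigma(f(q)^{\top}g(d))]$ separately, the difference factors as
\[
\bigl[\sigma(F(q)^{\top}G(d)) - y\bigr]\cdot \Bigl\{\log\sigma\bigl(f(q)^{\top}g(d)\bigr) \;-\; \log\bigl[1-\sigma\bigl(f(q)^{\top}g(d)\bigr)\bigr]\Bigr\}.
\]

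Next I apply the elementary logit identity $\log\sigma(x) - \log(1-\sigma(x)) = x$, which collapses the bracketed expression to the raw student score $f(q)^{\top}g(d)$. Consequently, the pointwise difference is exactly $[\sigma(F(q)^{\top}G(d)) - y]\cdot f(q)^{\top}g(d)$. This one-line algebraic identity is where essentially all the work lies; once it is in hand the rest is routine. Note in particular that the difference is a signed scalar whose sign depends on whether the teacher's soft label exceeds or undershoots the true label~$y$.

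Finally, I take expectations over $(q,d,y)$, upper bound the signed quantity by its absolute value, and apply Cauchy--Schwarz to get $|f(q)^{\top}g(d)| \le \|f(q)\|_{2}\,\|g(d)\|_{2} \le K\cdot K = K^{2}$ uniformly, using the $\ell_2$-norm assumption. Pulling this constant out of the expectation yields the claimed bound (with $K_{Q}K_{D} = K^{2}$ under the lemma's single-constant hypothesis). I do not foresee any real obstacle: the only subtle step is recognizing the logit identity that collapses the two log-terms, and one must be mildly careful that the inequality direction is preserved by the absolute-value step, which is automatic since the right-hand side is nonnegative. This lemma then plugs directly into the proof of Theorem~\ref{thm:teacher-student-gap_appn} to handle the single non-empirical term $R(s^{f,g}) - R(s^{f,g}, s^{F,G})$ in the telescoping decomposition used there.
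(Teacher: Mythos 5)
Your proof is correct and is essentially the paper's argument: the paper rewrites both losses via the softplus identity $\gamma(x)=x+\gamma(-x)$ to obtain the same pointwise difference $\bigl(\sigma(F(q)^{\top}G(d))-y\bigr)\,f(q)^{\top}g(d)$ that you get from the logit identity $\log\sigma(x)-\log(1-\sigma(x))=x$, and then applies Cauchy--Schwarz with the norm bound to conclude $K^{2}\,\mathbb{E}\bigl[|\sigma(F(q)^{\top}G(d))-y|\bigr]$. Your reading of $K_{Q}K_{D}$ as $K^{2}$ under the lemma's single-constant hypothesis is also the intended one.
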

\begin{proof}
Similar to the proof of Proposition~\ref{prop:bin_ce_risk_bound_n}, we utilize the fact that
\begin{align*}
\ell\big(s^{F,G}(q,d), y\big) & =(1-y)F(q)^{\top}G(d)+\gamma(-F(q)^{\top}G(d)), \\
\ell_{\rm d}\big(s^{f,g}(q,d),s^{F,G}(q,d)\big) & =\left(1-\sigma(F(q)^{\top}G(d)\right)f(q)^{\top}g(d)+\gamma(-f(q)^{\top}g(d)),
\end{align*}
where $\gamma(v):=\log[1+e^{v}]$ is the softplus function. Now, 
\begin{align}
&\mathbb{E}\left[\ell\big(s^{f,g}(q,d), y\big) - \ell_{\rm d}\big(s^{f,g}(q,d),s^{F,G}(q,d)\big)\right]  \\
 & \qquad \qquad \qquad {=}\mathbb{E}\left[(1-y)f(q)^{\top}g(d)+\gamma(-f(q)^{\top}g(d))\right]\nonumber \\
 & \qquad \qquad \qquad \phantom{=} - \mathbb{E}\left[\left(1-\sigma(F(q)^{\top}G(d))\right)f(q)^{\top}g(d)+\gamma(-f(q)^{\top}g(d))\right]\nonumber \\
 & \qquad \qquad \qquad=\mathbb{E}\left[\Big(1 - y - \big(1-\sigma(F(q)^{\top}G(d))\big)\Big) F(q)^{\top}G(d)\right]\nonumber \\
 & \qquad \qquad \qquad \le K^2\mathbb{E}\left[\left|\sigma(F(q)^{\top}G(d))-y\right|\right],\label{eq:bce_diff4}  
\end{align}
which completes the proof.
\end{proof}

\subsection{Uniform deviation bound}
\label{appen:unif_conv_bd}
Let $\FCal$ denote the class of functions that map queries in $\QCal$ to their embeddings in $\R^k$ via the query encoder. Define $\GCal$ analogously for the doc encoder, which consists of functions that map documents in $\DCal$ to their embeddings in $\R^k$. To simplify exposition, we assume that each training example consists of a single relevant or irrelevant document for each query, i.e., $L= 1$ in Section~\ref{sec:bg}. 
Let 
\[
\FCal \GCal = \{ (q,d) \mapsto f(q)^\top g(d) 
\;\mid\; f \in \FCal, g \in \GCal\}
\]
Given $\SCal_n = \{(q_i,d_i, y_i) : i\in [n]\}$, let $N(\epsilon, \HCal)$ denote the $\eps$-covering number of a function class $\HCal$ with respect to $L_2(\mathbb{P}_n)$ norm, where 
$\|h\|_{L_2(\mathbb{P}_n)}^2 := \|h\|_n^2 := \frac 1n \sum_{i=1}^n \| h(q_i, d_i) \|_2^2$. Depending on the context, the functions in $\HCal$ may map to $\R$ or $\R^d$.
\begin{proposition}
\label{prop:lower_complexity_with_frozen_docencoder}
Let $s^{\mathrm{t}}$ be scorer of a teacher model and $\ell_{\rm d}$ be a distillation loss function which is $L_{\ell_{\rm d}}$-Lipschitz in its first argument. Let the embedding functions in $\FCal$ and $\GCal$ output vectors with  $\ell_2$ norms at most $K$. 
Define the uniform deviation
\[
\ECal_n (\FCal, \GCal) = \sup_{f \in \FCal, g \in \GCal} \left|\frac{1}{n}\sum\nolimits_{i \in [n]} \ell_{\rm d}\big( f(q_i)^\top g(d_i), s_{q_i,d_i}^{\rm t}\big) - \mathbb{E}_{q,d} \ell_{\rm d} \big( f(q)^\top g(d), s_{q,d}^{\rm t}\big)\right|.
\]
For any $g^* \in \GCal$, we have
\begin{align*}
  \mathbb{E}_{\SCal_n} \ECal_n (\FCal, \GCal)
  &\leq
  \mathbb{E}_{\SCal_n}\frac{48 K L_{\ell_{\rm d}}}{\sqrt{n}} \int_{0}^\infty \sqrt{\log N(u, \FCal) + \log N(u, \GCal)} \;du,\\
  \mathbb{E}_{\SCal_n} \ECal_n (\FCal, \{g^*\}) &\leq
  \mathbb{E}_{\SCal_n} \frac{48 K L_{\ell_{\rm d}}}{\sqrt{n}} \int_{0}^\infty \sqrt{\log N(u, \FCal)} \;du.
\end{align*}
\end{proposition}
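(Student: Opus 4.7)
The plan is to reduce the uniform deviation to a Rademacher average, peel off the loss via contraction, and then control the resulting Rademacher complexity of the bilinear class $\FCal\GCal$ through Dudley's entropy integral. First, a standard symmetrization argument applied to the i.i.d.\ sample $\SCal_n$ gives $\mathbb{E}_{\SCal_n}\ECal_n(\FCal,\GCal) \le 2\,\mathbb{E}_{\SCal_n,\epsilon}\sup_{f\in\FCal,g\in\GCal}\tfrac{1}{n}\sum_{i}\epsilon_i\,\ell_{\rm d}(f(q_i)^\top g(d_i),\,s^{\rm t}_{q_i,d_i})$, where the $\epsilon_i$ are i.i.d.\ Rademacher variables. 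Since the teacher score is fixed per example and $\ell_{\rm d}(\cdot,\,s^{\rm t}_{q,d})$ is $L_{\ell_{\rm d}}$-Lipschitz, Talagrand's contraction inequality strips the loss and reduces the right-hand side to $2L_{\ell_{\rm d}}$ times the empirical Rademacher complexity of the scalar class $\FCal\GCal = \{(q,d)\mapsto f(q)^\top g(d)\}$.

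The core of the argument is to control this complexity via the Dudley integral, $R_n(\FCal\GCal)\le\tfrac{12}{\sqrt{n}}\int_{0}^{\infty}\sqrt{\log N(v,\FCal\GCal,\|\cdot\|_n)}\,dv$, and then convert the joint covering number into a product of individual ones. For this I would use the bilinear decomposition $|f_1(q)^\top g_1(d) - f_2(q)^\top g_2(d)| \le K\bigl(\|f_1(q)-f_2(q)\| + \|g_1(d)-g_2(d)\|\bigr)$, which follows from adding and subtracting $f_1(q)^\top g_2(d)$, Cauchy--Schwarz, and the assumed embedding bound $K$. Squaring, applying $(a+b)^2\le 2(a^2+b^2)$, and averaging over $\SCal_n$ yields $\|f_1^\top g_1 - f_2^\top g_2\|_n \le 2K\bigl(\|f_1-f_2\|_{n,Q}^{2} + \|g_1-g_2\|_{n,D}^{2}\bigr)^{1/2}$, where the two norms on the right are the empirical $L_2$-norms over queries and documents, respectively. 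Consequently the Cartesian product of a $u$-cover of $\FCal$ with a $u$-cover of $\GCal$ is a $2Ku$-cover of $\FCal\GCal$, i.e.\ $N(2Ku,\FCal\GCal) \le N(u,\FCal)\,N(u,\GCal)$. Substituting $v=2Ku$ in Dudley's integral produces $\tfrac{24K}{\sqrt{n}}\int_{0}^{\infty}\sqrt{\log N(u,\FCal) + \log N(u,\GCal)}\,du$, and combining with the earlier factor $2L_{\ell_{\rm d}}$ gives the stated constant $48KL_{\ell_{\rm d}}$.

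For the second inequality with $\GCal=\{g^*\}$, I would rerun the identical argument; the bilinear bound simplifies to $|f_1(q)^\top g^*(d) - f_2(q)^\top g^*(d)| \le K\|f_1(q)-f_2(q)\|$, so a $u$-cover of $\FCal$ induces a $Ku$-cover of $\{(q,d)\mapsto f(q)^\top g^*(d)\}$, the $\log N(u,\GCal)$ term drops from the integrand, and the conclusion follows with the stated constant (in fact the resulting bound is slightly loose by a factor of two here).

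The main obstacle is the covering-product step: obtaining the correct scaling in $K$ requires carefully tracking three different empirical $L_2(\mathbb{P}_n)$-norms (over queries for $\FCal$, over documents for $\GCal$, over pairs for $\FCal\GCal$) and invoking the bilinear decomposition before, not after, passing to the pair-norm. A cruder sup-norm bound would introduce extra $K$ factors, and a careless chaining step can produce spurious logarithmic factors that spoil the stated constant. Once the covering-product inequality is in hand, symmetrization and Talagrand contraction are standard and transfer verbatim from the scalar classification setting.
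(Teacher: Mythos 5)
Your proposal is correct and follows essentially the same route as the paper: symmetrization to a Rademacher average, Lipschitz contraction to strip $\ell_{\rm d}$, Dudley's entropy integral for $\FCal\GCal$, and the covering-number product bound $N(2Ku,\FCal\GCal)\le N(u,\FCal)N(u,\GCal)$ obtained by adding and subtracting the cross term and applying Cauchy--Schwarz (the paper's Lemma on covers does exactly this via the triangle inequality in $\|\cdot\|_n$), followed by the change of variables that yields the constant $48KL_{\ell_{\rm d}}$. Your side remark that the frozen-encoder bound could be tightened by a factor of two is also consistent with the paper, which simply reuses the same constant.
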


\begin{proof}[Proof of Proposition~\ref{prop:lower_complexity_with_frozen_docencoder}]
We first symmetrize excess risk to get Rademacher complexity, then bound the Rademacher complexity with Dudley's entropy integral.

For a training set $\SCal_n$, the empirical Rademacher complexity of a class of functions $\HCal$ that maps $\QCal \times \DCal$ to $\R$ is defined by
\[
\rad_n(\HCal) = \mathbb{E}_\sigma \sup_{h \in \HCal} \frac{1}{n} \sum_{i=1}^n \varepsilon_i h(q_i, d_i),
\]
where $\{\varepsilon_i\}$ denote i.i.d. Rademacher random variables taking the value in $\{+1, -1\}$ with equal probability.
By symmetrization~\citep{Bousquet2004} and the fact that $\ell_{\rm d}$ is $L_{\ell_{\rm d}}$-Lipschitz in its first argument, we get
\[
E_{\SCal_n} \ECal_n (\FCal, \GCal) \leq 2 L_{\ell_{\rm d}} \mathbb{E}_{\SCal_n} \rad_n(\FCal\GCal).
\]
Then, Dudley's entropy integral \citep[see, e.g.,][]{ledoux1991probability} gives
\[
\rad_n(\FCal\GCal) \leq \frac{12}{\sqrt{n}} \int_0^\infty \sqrt{\log N(u, \FCal \GCal)} \; du. 
\]
From Lemma~\ref{lem:cov_num_prod_class} with $K_Q = K_D = K$, for any $u > 0$,
\[
N(u, \FCal\GCal) \leq N \left(\frac{u}{2K}, \FCal \right) N\left(\frac{u}{2K}, \GCal \right).
\]
Putting these together,
\begin{align}
\label{eq:rad-prop-1}
\mathbb{E}_{\SCal_n} \ECal_n (\FCal, \GCal) \leq \frac{24 L_{\ell_{\rm d}}}{\sqrt{n}} \int_0^\infty \sqrt{\log N(u/2K, \FCal) + \log N(u/2K, \GCal)} \;du.
\end{align}
Following the same steps with $\GCal$ replaced by $\{g^*\}$, we get
\begin{align}
\label{eq:rad-prop-2}
\mathbb{E}_{\SCal_n} \ECal_n (\FCal,\{g^*\} ) \leq \frac{24 L_{\ell_{\rm d}}}{\sqrt{n}} \int_0^\infty \sqrt{\log N(u/2K, \FCal)} \;du
\end{align}
By changing variable in Eq.~\eqref{eq:rad-prop-1} and Eq.~\eqref{eq:rad-prop-2}, we get the stated bounds.
\end{proof}

For $f: \QCal \to \R^k, g: \DCal \to \R^k$, 
define $f g: \QCal \times \DCal \to \R$ by $f g(q,d) = f(q)^\top g(d).$

\begin{lemma}
\label{lem:cov_num_prod_class}
Let $f_1,\dots,f_N$ be an $\eps$-cover of $\FCal$ and 
$g_1,\dots,g_M$ be an $\eps$-cover of $\GCal$ in $L_2(\mathbb{P}_n)$ norm.
Let $\sup_{f\in \FCal} \sup_{q\in \QCal} \|f(q) \|_2 \leq K_Q$ and
$\sup_{g\in \GCal} \sup_{d\in \DCal} \|g(d) \|_2 \leq K_D $.
Then, 
\[
\{ f_i g_j \mid i\in [N], j \in [M]\}
\]
is a $(K_Q + K_D) \eps$-cover of $\FCal \GCal$.
\end{lemma}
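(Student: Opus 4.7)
My plan is to show that for any $f\in\FCal$ and $g\in\GCal$, the cover element $f_i g_j$ chosen by picking $f_i$ closest to $f$ and $g_j$ closest to $g$ in $L_2(\mathbb{P}_n)$ achieves $\|fg - f_i g_j\|_n \le (K_Q + K_D)\eps$. The standard trick is to add and subtract a mixed term $f_i g$ (or equivalently $f g_j$) and then apply the triangle inequality in $L_2(\mathbb{P}_n)$:
\begin{equation*}
\|fg - f_i g_j\|_n \;\le\; \|(f - f_i)g\|_n + \|f_i(g - g_j)\|_n.
\end{equation*}

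For each of the two terms, I would expand the scalar-valued inner product pointwise and apply Cauchy--Schwarz in $\R^k$ at each training sample $(q_l,d_l)$. For the first term this gives $|(f(q_l)-f_i(q_l))^\top g(d_l)| \le \|f(q_l)-f_i(q_l)\|_2 \cdot \|g(d_l)\|_2 \le K_D\|f(q_l)-f_i(q_l)\|_2$, since $g\in\GCal$ is uniformly bounded by $K_D$. Squaring, averaging over $l$, and taking the square root then yields $\|(f-f_i)g\|_n \le K_D\|f-f_i\|_n$. An entirely symmetric argument using the bound $\|f_i(q_l)\|_2 \le K_Q$ (which holds because the cover elements lie in $\FCal$, or equivalently we could rewrite the decomposition with $g_j$ in the first slot and $f$ in the second) gives $\|f_i(g-g_j)\|_n \le K_Q\|g-g_j\|_n$.

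Combining the two, and using that $f_i, g_j$ were chosen from the respective $\eps$-covers so that $\|f-f_i\|_n \le \eps$ and $\|g - g_j\|_n \le \eps$, I obtain $\|fg - f_i g_j\|_n \le K_D \eps + K_Q \eps = (K_Q + K_D)\eps$, which is exactly the claim. The only place where one has to be mildly careful is to make sure the uniform boundedness hypothesis is invoked against a \emph{specific} function (either $f_i$ or $g$), rather than against the difference $f-f_i$ whose norm is not controlled by $K_Q$; choosing the split $(f-f_i)g + f_i(g-g_j)$ handles this cleanly. Beyond that, this is a routine product-class covering number argument and I do not anticipate any substantive obstacle.
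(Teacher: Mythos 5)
Your proof is correct and takes essentially the same route as the paper's: add and subtract a mixed product, apply the triangle inequality in $L_2(\mathbb{P}_n)$, and bound each resulting term by per-sample Cauchy--Schwarz together with the uniform norm bounds on the factor that is not a difference. The only (immaterial) difference is that you use the mixed term $f_i g$ whereas the paper uses $f\tilde g$, so the roles of $K_Q$ and $K_D$ are swapped between the two terms; the final bound $(K_Q+K_D)\eps$ is identical.
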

\begin{proof}[Proof of Lemma~\ref{lem:cov_num_prod_class}]
For arbitrary $f \in \FCal, g \in \GCal$, there exist
$\tilde f  \in \{f_1, \dots, f_N\}, \tilde g \in \{g_1, \dots, g_M\}$ such that $ \| f - \tilde f \|_n \leq \eps, \| g - \tilde g \|_n \leq \eps.$ It is sufficient to show that 
$\| f g - \tilde f \tilde g\|_n \leq (K_Q + K_D) \eps.$
Decomposing using triangle inequality,
\begin{align}
    \| f g - \tilde f \tilde g \|_n
    &= 
    \| f g - f \tilde g + f \tilde g - \tilde f \tilde g \|_n \nonumber
    \\
    \label{eq:fg_decomposition}
    &\leq
    \| f g - f \tilde g \|_n + 
    \| f \tilde g - \tilde f \tilde g \|_n.
\end{align}
To bound the first term, using Cauchy-Schwartz inequality, we can write
\[
    \frac{1}{n} \sum_{i=1}^n \left( f(q_i)^\top g(d_i) - \tilde f(q_i)^\top \tilde g(d_i) \right)^2
    \leq 
    \sup_{q \in \QCal} \| f(q) \|_2^2 \cdot \frac{1}{n} \sum_{i=1}^n \| (g - \tilde g) (d_i) \|_2^2.
    \nonumber
\]
Therefore 
\[
    \| f g - f \tilde g \|_n \leq K_Q \| g - \tilde g\|_n \leq K_Q \eps.
\]
Similarly
\[
    \| f \tilde g - \tilde f \tilde g \|_n \leq K_D \| f - \tilde f \|_n \leq K_D \eps
\]
Plugging these in Eq.~\eqref{eq:fg_decomposition}, we get
\[
\| f g - \tilde f \tilde g\|_n \leq (K_Q + K_D) \eps.
\]
This completes the proof.
\end{proof}

\section{Evaluation metric details}
\label{appendix:evaluation}

For NQ, we evaluate models with full \emph{strict} recall metric, meaning that the model is required to find a \emph{golden} passage from the whole set of candidates (21M). Specifically, for $k \geq 1$, recall@$k$ or R@$k$ denotes the percentage of questions for which the associated golden passage is among the $k$ passages that receive the highest relevance scores by the model. In addition, we also present results for \emph{relaxed} recall metric considered by \citet{Karpukhin:2020}, where R@$k$ denotes the percentage of questions where the corresponding answer {string}
is present in at least one of the $k$ passages with the highest model (relevance) scores. 

For both MSMARCO retrieval and re-ranking tasks, we follow the standard evaluation metrics \emph{Mean Reciprocal Rank}(MRR)@10 and \emph{normalized Discounted Cumulative Gain} (nDCG)@10. For retrieval tasks, these metrics are computed with respect to the whole set of candidates passages (8.8M). On the other hand, for re-ranking task, the metrics are computed with respect to BM25 generated 1000 candidate passages --\emph{the originally provided}-- for each query. 
{
Please note that some papers use more powerful models (e.g., DE models) to generate the top 1000 candidate passages, which is not a standard re-ranking evaluation and should not be compared directly.
}
We report 100 $\times$ MRR@10 and 100 $\times$ nDCG@10, as per the convention followed in the prior works. 

\section{Query generation details}
\label{appendix:query-generation}

We introduced query generation to encourage geometric matching in local regions, which can aid in transferring more knowledge in confusing neighborhoods. As expected, this further improves the distillation effectiveness on top of the embedding matching in most cases. To focus on the local regions, we generate queries from the observed examples by adding local perturbation in the data manifold (embedding space). Specifically, we employ an off-the-shelf encoder-decoder model -- BART-base~\cite{Lewis:2020BART}. First, we embed an observed query in the corresponding dataset. Second, we add a small perturbation to the query embedding. Finally, we decode the perturbed embedding to generate a new query in the input space. Formally, the generated query $x'$ given an original query $x$ takes the form $ x' = \mathrm{Dec}( \mathrm{Enc}(x) + \epsilon ) $, where $\mathrm{Enc}()$ and $\mathrm{Dec}()$ correspond to the encoder and the decoder from the off-the-shelf model, respectively, and $\epsilon$ is an isotropic Gaussian noise. Furthermore, we also randomly mask the original query tokens with a small probability. We generate two new queries from an observed query and use them as additional data points during our distillation procedure.

As a comparison, we tried adding the same size of random sampled queries instead of the ones generated via the method described above. That did not show any benefit, which justifies the use of our query/question generation method.

\section{Experimental details and additional results}
\label{appendix:additional-exps}

\subsection{Additional training details}\label{appen:additional-training-details}
\noindent\textbf{Optimization.}~For all of our experiments, we %train all models with 
use ADAM weight decay optimizer with a short warm up period and a linear decay schedule. We use the initial learning rate of $10^{-5}$ and $2.8\times10^{-5}$ for experiments on
NQ and MSMARCO, respectively. We chose batch sizes to be $128$.

\subsection{Additional results on NQ}
\label{appendix:additional-exps-nq}

See Table~\ref{tab:nq-dev-relaxed} for the performance of various DE models on NQ, as measured by the \textit{relaxed} recall metric.

% New NQ DE->DE table WIP
\begin{table*}[h]
    \centering
    \caption{\textit{Relaxed} recall performance of various student DE models on NQ dev set, including symmetric DE student model (67.5M or 11.3M transformer for both encoders), and asymmetric DE student model (67.5M or 11.3M transformer as query encoder and document embeddings inherited from the teacher). All distilled students used the same teacher (110M parameter BERT-base models as both encoders), with the performance (in terms of relaxed recall) of Recall@5 = 87.2, Recall@20 = 94.7, Recall@100 = 98.1. \emph{Note: the proposed method can achieve 100\% of teacher's performance even with 2/3$^{rd}$ size of the query encoder, and 92-97\% with even 1/10$^{th}$ size.}}
    \label{tab:nq-dev-relaxed}
    \vskip 0.1in
    \scalebox{0.9}{
    \begin{tabular}{@{}lcccccccc}
    \toprule
    \multirow{2}{*}{\textbf{Method}} & \multicolumn{2}{c}{\textbf{Recall@5}} && \multicolumn{2}{c}{\textbf{Recall@20}} && \multicolumn{2}{c}{\textbf{Recall@100}} \\
    \cmidrule{2-3} \cmidrule{5-6}  \cmidrule{8-9} 
    & \textbf{67.5M} & \textbf{11.3M} && \textbf{67.5M} & \textbf{11.3M} && \textbf{67.5M} & \textbf{11.3M} \\
    \midrule
    Train student directly              & 62.5 & 49.7 && 82.5 & 73.0 && 93.7 & 88.2 \\
    \quad + Distill from teacher        & 82.7 & 66.1 && 92.9 & 84.0 && 97.3 & 93.1 \\
    \quad + Inherit document embeddings & 84.7 & 73.0 && 93.7 & 85.4 && 97.6 & 93.3 \\
    \quad + Query embedding matching    & 87.2 & 77.6 && \best{95.0} & 88.0 && 97.9 & 94.3 \\
    \quad + Query generation            & \best{87.8} & \best{80.3} && 94.8 & \best{89.9} && \best{98.0} & \best{95.6} \\
    \midrule
    Train student only using embedding \\
    matching and inherit doc embeddings & 86.4 & 69.1 && 94.2 & 81.6 && 97.7 & 89.9 \\
    \quad + Query generation            & 86.7 & 72.9 && 94.4 & 84.9 && 97.8 & 92.2 \\
    \bottomrule
    \end{tabular}}
\end{table*}

\subsection{Additional results on MSMARCO}
\label{appendix:additional-exps-msmarco}

\subsubsection{DE to DE distillation}

\noindent See Table~\ref{tab:msmarco_dede_ndcg} for DE to DE distillation results on MSMARCO retrieval and re-ranking task, as measured by the nDCG@10 metric (see Section~\ref{sec:exp-de-to-de-main} for the results on MRR@10 metric).

\begin{table}[h!]
    \vspace{-3mm}
    \centering
    \caption{Performance of various DE models on MSMARCO dev set for both \emph{re-ranking} ({original top1000}) and \emph{retrieval} tasks ({full corpus}). The teacher model (110.1M parameter BERT-base models as both encoders) for reranking achieves nDCG@10 of 42.7 and that for retrieval get nDCG@10 44.2.  The table shows performance (in nDCG@10) of the symmetric DE student model (67.5M or 11.3M transformer as both encoders), and asymmetric DE student model (67.5M or 11.3M transformer as query encoder and document embeddings inherited from the teacher).
    }
    \label{tab:msmarco_dede_ndcg}
    % \vskip 0.05in
    \vspace{1mm}
    \hspace{-2mm}
    \scalebox{0.85}{
    \begin{tabular}{@{}lccccc@{}}
    \toprule
    \multirow{2}{*}{\textbf{Method}} & \multicolumn{2}{c}{\textbf{Re-ranking}} && \multicolumn{2}{c}{\textbf{Retrieval}} \\
    \cmidrule{2-3} \cmidrule{5-6}
    & \textbf{67.5M} & \textbf{11.3M} && \textbf{67.5M} & \textbf{11.3M} \\
    \midrule
    Train student directly       & 32.2 & 29.7  && 27.2 & 22.5 \\
    \; + Distill from teacher     & 40.2 & 35.8 && 41.3  & 34.1  \\
    \; + Inherit doc embeddings   & 41.0 & 37.7 && 42.2 & 36.2  \\
    \; + Query embedding matching & {42.0} & \best{40.8} && \best{43.8} & \best{41.9}  \\
    \; + Query generation        &  {42.0} & 40.1 && \best{43.8} & 41.2 \\
    \midrule 
    Train student using only \\
    embedding matching and \\ 
    inherit doc embeddings & \best{42.3} & 39.3 && {43.3} & 37.6  \\
    \; + Query generation   & \best{42.3} & 39.9 && {43.4} & 39.2 \\
    \bottomrule
    \end{tabular}}
    \vspace{-2mm}
\end{table} 

\newpage
\subsection{Additional results on BEIR benchmark}
\label{appen:beir}

See Table~\ref{tab:beir-retrieval-ndcg} (NDCG@10) and Table~\ref{tab:beir-retrieval-recall} (Recall@100) for BEIR benchmark results. All numbers are from BEIR benchmark paper~\citep{thakur2021beir}. As common practice, non-public benchmark sets\footnote{https://github.com/beir-cellar/beir}, \{BioASQ, Signal-1M(RT), TREC-NEWS, Robust04\}, are removed from the table. Following the original BEIR paper~\citep{thakur2021beir} (Table~9 and Appendix~G from the original paper), we utilized Capped Recall@100 for TREC-COVID dataset. 

\newpage

\begin{table*}[t!]
    \caption{In-domain and zero-shot retrieval performance on BEIR benchmark~\citep{thakur2021beir}, as measured by \textbf{nDCG@10}. All the baseline number in the table are taken from \cite{thakur2021beir}. We exclude (in-domain) MSMARCO from average computation as common practice.}
    \label{tab:beir-retrieval-ndcg}
    \vskip 0.1in
    \small
    \resizebox{\textwidth}{!}{\begin{tabular}{@{}l  c  c c c  c c c c >{\centering\arraybackslash}m{2cm}  >{\centering\arraybackslash}m{2cm} @{}} % {l | c | c c c | c c c c c | c }
        \toprule
        \textbf{Model ($\rightarrow$)} &
        \multicolumn{1}{c}{Lexical} &
        \multicolumn{3}{c}{Sparse} &
        \multicolumn{6}{c}{Dense} \\ 
        \cmidrule(r){1-1}
        \cmidrule(lr){2-2}
        \cmidrule(lr){3-5}
        \cmidrule(l){6-11}
        \textbf{Dataset ($\downarrow$)} &
        \multicolumn{1}{c}{\textbf{BM25}} &
        \multicolumn{1}{c}{\textbf{DeepCT}} &
        \multicolumn{1}{c}{\textbf{SPARTA}} &
        \multicolumn{1}{c}{\textbf{docT5query}} &
        \multicolumn{1}{c}{\textbf{DPR}} &
        \multicolumn{1}{c}{\textbf{ANCE}} &
        \multicolumn{1}{c}{\textbf{TAS-B}} &
        \multicolumn{1}{c}{\textbf{GenQ}} &
        {\centering\textbf{SentenceBERT} \par \emph{(our teacher)}} &
        {\centering\textbf{\embd} \par \emph{(ours)}}\\
        \midrule
    MS MARCO & 22.8 & 29.6$^\ddagger$ & 35.1$^\ddagger$ & 33.8$^\ddagger$ & 17.7 & 38.8$^\ddagger$ & 40.8$^\ddagger$ & {40.8}$^\ddagger$ & 47.1$^\ddagger$ & {46.6}$^\ddagger$ \\ \midrule 
    TREC-COVID & 65.6 & 40.6 & 53.8 & {71.3} & 33.2 & 65.4 & 48.1 & 61.9 & 75.4 & {72.3}  \\
    NFCorpus & {32.5} & 28.3 & 30.1 & {32.8} & 18.9 & 23.7 & 31.9 & 31.9 & 31.0 & 30.7  \\
    NQ & 32.9 & 18.8 & 39.8 & 39.9 & {47.4}$^\ddagger$ & 44.6 & 46.3 & 35.8 & 51.5 & {50.8}  \\
    HotpotQA & {60.3} & 50.3 & 49.2 & 58.0 & 39.1 & 45.6 & {58.4} & 53.4 & 58.0 & 56.0  \\
    FiQA-2018 & 23.6 & 19.1 & 19.8 & 29.1 & 11.2 & {29.5} & 30.0 & {30.8} & 31.8 & {29.5}  \\
    ArguAna & 31.5 & 30.9 & 27.9 & 34.9 & 17.5 & 41.5 & {42.9} & {49.3} & 38.5 & 34.9  \\
    Touch\'e-2020 & {36.7} & 15.6 & 17.5 & {34.7} & 13.1 & 24.0 & 16.2 & 18.2 & 22.9 & 24.7  \\
    CQADupStack & 29.9 & 26.8 & 25.7 & {32.5} & 15.3 & 29.6 & 31.4 & {34.7} & 33.5 & 30.6  \\
    Quora & 78.9 & 69.1 & 63.0 & 80.2 & 24.8 & {85.2} & {83.5} & 83.0 & 84.2 & 81.4  \\
    DBPedia & 31.3 & 17.7 & 31.4 & 33.1 & 26.3 & 28.1 & {38.4} & 32.8 & 37.7 & {35.9}  \\
    SCIDOCS & {15.8} & 12.4 & 12.6 & {16.2} & 07.7 & 12.2 & 14.9 & 14.3 & 14.8 & 14.4  \\
    FEVER & {75.3} & 35.3 & 59.6 & 71.4 & 56.2 & 66.9 & 70.0 & 66.9 & 76.7 & {76.9}  \\
    Climate-FEVER & 21.3 & 06.6 & 08.2 & 20.1 & 14.8 & 19.8 & {22.8} & 17.5 & 23.5 & {22.5}  \\
    SciFact & {66.5} & 63.0 & 58.2 & {67.5} & 31.8 & 50.7 & 64.3 & 64.4 & 59.8 & 55.5  \\  \midrule
    
    AVG (w/o MSMARCO) & 43.0 & 31.0 & 35.5 & {44.4} & 25.5 & 40.5 & 42.8 & 42.5 & 45.7 & {44.0} \\ 
    \bottomrule
    \end{tabular}}
\end{table*}

\begin{table*}[t!]
    \caption{In-domain and zero-shot retrieval performance on BEIR benchmark~\citep{thakur2021beir}, as measured by \textbf{Recall@100}. All the baseline number in the table are taken from \cite{thakur2021beir}. $\ddagger$ indicates in-domain retrieval performance. $\ast$ indicates capped recall following original benchmark setup. We exclude (in-domain) MSMARCO from average computation as common practice.}
    \label{tab:beir-retrieval-recall}
    \vskip 0.1in
    \resizebox{\textwidth}{!}{\begin{tabular}{@{}l  c  c c c  c c c c >{\centering\arraybackslash}m{2cm}  >{\centering\arraybackslash}m{2cm} @{}}
        \toprule
        \textbf{Model ($\rightarrow$)} &
        \multicolumn{1}{c}{Lexical} &
        \multicolumn{3}{c}{Sparse} &
        \multicolumn{6}{c}{Dense} \\
        \cmidrule(r){1-1}
        \cmidrule(lr){2-2}
        \cmidrule(lr){3-5}
        \cmidrule(l){6-11}
        \textbf{Dataset ($\downarrow$)} &
        \multicolumn{1}{c}{\textbf{BM25}} &
        \multicolumn{1}{c}{\textbf{DeepCT}} &
        \multicolumn{1}{c}{\textbf{SPARTA}} &
        \multicolumn{1}{c}{\textbf{docT5query}} &
        \multicolumn{1}{c}{\textbf{DPR}} &
        \multicolumn{1}{c}{\textbf{ANCE}} &
        \multicolumn{1}{c}{\textbf{TAS-B}} &
        \multicolumn{1}{c}{\textbf{GenQ}} &
        {\centering\textbf{SentenceBERT} \par \hspace{1mm} \emph{(our teacher)}} &
        {\centering\textbf{\embd} \par \emph{(ours)}}\\
        \midrule
    MS MARCO      & 65.8 & 75.2$^\ddagger$ & 79.3$^\ddagger$ & 81.9$^\ddagger$ & 55.2 & 85.2$^\ddagger$ & {88.4}$^\ddagger$ & {88.4}$^\ddagger$ & 91.7$^\ddagger$ & {90.6}$^\ddagger$ \\ \midrule 
    TREC-COVID     & {49.8}$^*$ & 34.7$^*$ & 40.9$^*$ & {54.1}$^*$ & 21.2$^*$ & 45.7$^*$ & 38.7$^*$ & 45.6$^*$ & 54.1$^*$ & 48.8$^*$  \\
    NFCorpus       & 25.0 & 23.5 & 24.3 & 25.3 & 20.8 & 23.2 & {28.0} & {28.0} & 27.7 & {26.7}  \\
    NQ             & 76.0 & 63.6 & 78.7 & 83.2 & 88.0$^\ddagger$ & 83.6 & {90.3} & 86.2 & 91.1 & {89.9}  \\
    HotpotQA       & {74.0} & {73.1} & 65.1 & 70.9 & 59.1 & 57.8 & 72.8 & 67.3 & 69.7 & 68.3  \\
    FiQA-2018      & 53.9 & 48.9 & 44.6 & 59.8 & 34.2 & 58.1 & 59.3 & {61.8} & 62.0 & {60.1}  \\
    ArguAna        & 94.2 & 93.2 & 89.3 & {97.2} & 75.1 & 93.7 & 94.2 & {97.8} & 89.2 & 87.8  \\
    Touch\'e-2020  & {53.8} & 40.6 & 38.1 & {55.7} & 30.1 & 45.8 & 43.1 & 45.1 & 45.3 & 45.5  \\
    CQADupStack    & 60.6 & 54.5 & 52.1 & {63.8} & 40.3 & 57.9 & 62.2 & {65.4} & 63.9 & 61.3  \\
    Quora          & 97.3 & 95.4 & 89.6 & 98.2 & 47.0 & {98.7} & 98.6 & {98.8} & 98.5 & 98.1  \\
    DBPedia        & 39.8 & 37.2 & 41.1 & 36.5 & 34.9 & 31.9 & {49.9} & {43.1} & 46.0 & 42.6  \\
    SCIDOCS        & {35.6} & 31.4 & 29.7 & {36.0} & 21.9 & 26.9 & 33.5 & 33.2 & 32.5 & 31.5  \\
    FEVER          & 93.1 & 73.5 & 84.3 & 91.6 & 84.0 & 90.0 & {93.7} & 92.8 & 93.9 & {93.8}  \\
    Climate-FEVER  & 43.6 & 23.2 & 22.7 & 42.7 & 39.0 & 44.5 & {53.4} & 45.0 & 49.3 & {47.6}  \\
    SciFact        & {90.8} & 89.3 & 86.3 & {91.4} & 72.7 & 81.6 & 89.1 & 89.3 & 88.9 & 87.2  \\ \midrule
    AVG (w/o MSMARCO) & 63.4 & 55.9 & 56.2 & {64.7} & 47.7 & 60.0 & {64.8} & 64.2 & 65.1 & 63.5  \\
    \bottomrule
    \end{tabular}}
\end{table*}

\clearpage
\subsection{Additional results with single-stage trained teachers}
\label{appendix:old-teacher-results}
Hereby we evaluate \embd~with a simple single-stage trained teachers instead of teachers trained in complex multi-stage frameworks, in order to test the generalizability of the method.

Similar to Table~\ref{tab:nq-dev}, we conducted an experiment on top of single-stage trained teacher based on RoBERTa-base instead of AR2~\citep{zhang2022ar2} in the main text.
We also changed the student to be based on DistilRoBERTa or RoBERTa-mini accordingly for simplicity to use same tokenizer.

Table ~\ref{tab:nq-dev-old-teacher} demonstrates that \embd~provides a significant boost of the performance on top of standard distillation techniques similar to what we observed in Table~\ref{tab:nq-dev}.

\begin{table}[h]
    \vspace{-1em}
    \caption{\textit{Full} recall performance of various student DE models on NQ dev set, including symmetric DE student model, and asymmetric DE student models. All students used the same \emph{in-house teacher} (124M parameter RoBERTa-base models as both encoders), with the full Recall@5 = 64.6, Recall@20 = 81.7, and Recall@100 = 91.5.
    }\label{tab:nq-dev-old-teacher}
    \centering
    \vspace{2mm}
    \scalebox{0.85}{
    \begin{tabular}{@{}lc@{\hspace{2.0mm}}c@{ }cc@{\hspace{-2mm} }c@{\hspace{2.0mm}}c@{ }c@{}}
    \toprule
    \multirow{2}{*}{\textbf{Method}} & \multicolumn{3}{c}{\textbf{6-Layer (82M)}} && \multicolumn{3}{c}{\textbf{4-Layer (16M)}}\\
    \cmidrule(r){2-4} \cmidrule{6-8}
    & \textbf{R@5} & \textbf{R@20} & \textbf{R@100} && \textbf{R@5} & \textbf{R@20} & \textbf{R@100} \\
    \midrule
    Train student directly           & 41.9 & 64.5 & 82.0 && 39.5 & 59.9 & 76.3 \\
    \; + Distill from teacher        & 48.3 & 67.2 & 80.9 && 44.9 & 61.1 & 74.8 \\
    \; + Inherit doc embeddings & 56.9 & 74.3 & 85.4 && 47.2 & 64.0 & 77.0 \\
    \; + Query embedding matching    & 61.8 & 78.7 & 89.0 && 56.7 & 74.6 & 85.9 \\
    \; + Query generation            & 61.7 & 79.4 & 89.6 && 57.1 & 75.2 & \best{86.7} \\
    \midrule
    Train student using only\\
    embedding matching and \\
    inherit doc embeddings & 63.7 & 80.3 & 90.3 &&  57.9 & 74.6 & 85.7 \\
    \; + Query generation           & \best{64.1} & \best{80.5} & \best{90.4}  && \best{58.9} & \best{76.0} & 86.6 \\
    \bottomrule
    \end{tabular}}
\end{table}

Furthermore, we also consider a in-house trained teacher (RoBERTa-base) for MSMARCO re-ranking task. Table~\ref{tab:msmarco-dede-old-teachder} demonstrates a similar pattern to Table~\ref{tab:msmarco_dede}, providing evidence of generalizability of \embd.

\begin{table}[h]
    \vspace{-1em}
    \caption{Reranking performance of various DE models on MSMARCO dev set. We utilize a RoBERTa-base in-house trained teacher achieving MRR@10 of 33.1 and nDCG@10 of 38.8 is used. The table shows performance of the symmetric DE student model and asymmetric DE student models.}
    \vspace{2mm}
    \centering
    \scalebox{0.85}{
    \begin{tabular}{@{}lccccc@{}}
    \toprule
    \multirow{2}{*}{\textbf{Method}} & \multicolumn{2}{c}{\textbf{MRR@10}} && \multicolumn{2}{c}{\textbf{nDCG@10}} \\
    \cmidrule{2-3} \cmidrule{5-6}
    & \textbf{82M} & \textbf{16M} && \textbf{82M} & \textbf{16M} \\
    \midrule
    Train student directly           & 29.7 & 26.3 && 35.2 & 31.4 \\
    \quad + Distill from teacher     & 31.6 & 28.4 && 37.2 & 33.5 \\
    \quad + Inherit doc embeddings   & 32.4 & 30.2 && 38.0 & 35.8 \\
    \quad + Query embedding matching & 32.8 & 31.9 && 38.6 & 37.6 \\
    \quad + Query generation         & 33.0 & \best{32.0} && 38.8 & \best{37.7} \\
    \midrule 
    Train student only using embedding \\
    matching and inherit doc embeddings & 32.7 & 31.8 && 38.5 & 37.5 \\
    \quad + Query generation            & \best{33.0} & 31.8 && \best{38.9} & 37.5 \\
    \bottomrule
    \end{tabular}}
    \label{tab:msmarco-dede-old-teachder}
    \vspace{-3mm}
\end{table} 

These result showcase that our method brings performance boost orthogonal to how teacher was trained, whether single-staged or multi-staged.

\clearpage
\section{Embedding analysis}\label{appen:exp-analysis}

\subsection{DE to DE distillation}
\label{appen:embed-analysis-de2de}

Traditional score matching-based distillation might not result in transfer of relative geometry from teacher to student. To assess this, we look at the discrepancy between the teacher and student query embeddings for all $q,q'$ pairs: $\|{\tt emb}^{\rm t}_{q} - {\tt emb}^{\rm t}_{q'}\| - \|{\tt emb}^{\rm s}_{q} - {\tt emb}^{\rm s}_{q'}\|$. Note that the analysis is based on NQ, and we focus on the teacher and student DE models based on BERT-base and DistilBERT, respectively.
As evident from Fig.~\ref{fig:de_distance}, embedding matching loss significantly reduces this discrepancy.

\subsection{CE to DE distillation}
\label{appen:embed-analysis-ce2de}

% \begin{figure}[bh]
% \vspace{5mm}
\begin{wrapfigure}{r}{0.41\textwidth}
    \vspace{-9mm}
    \centering
    \includegraphics[width=0.96\linewidth]{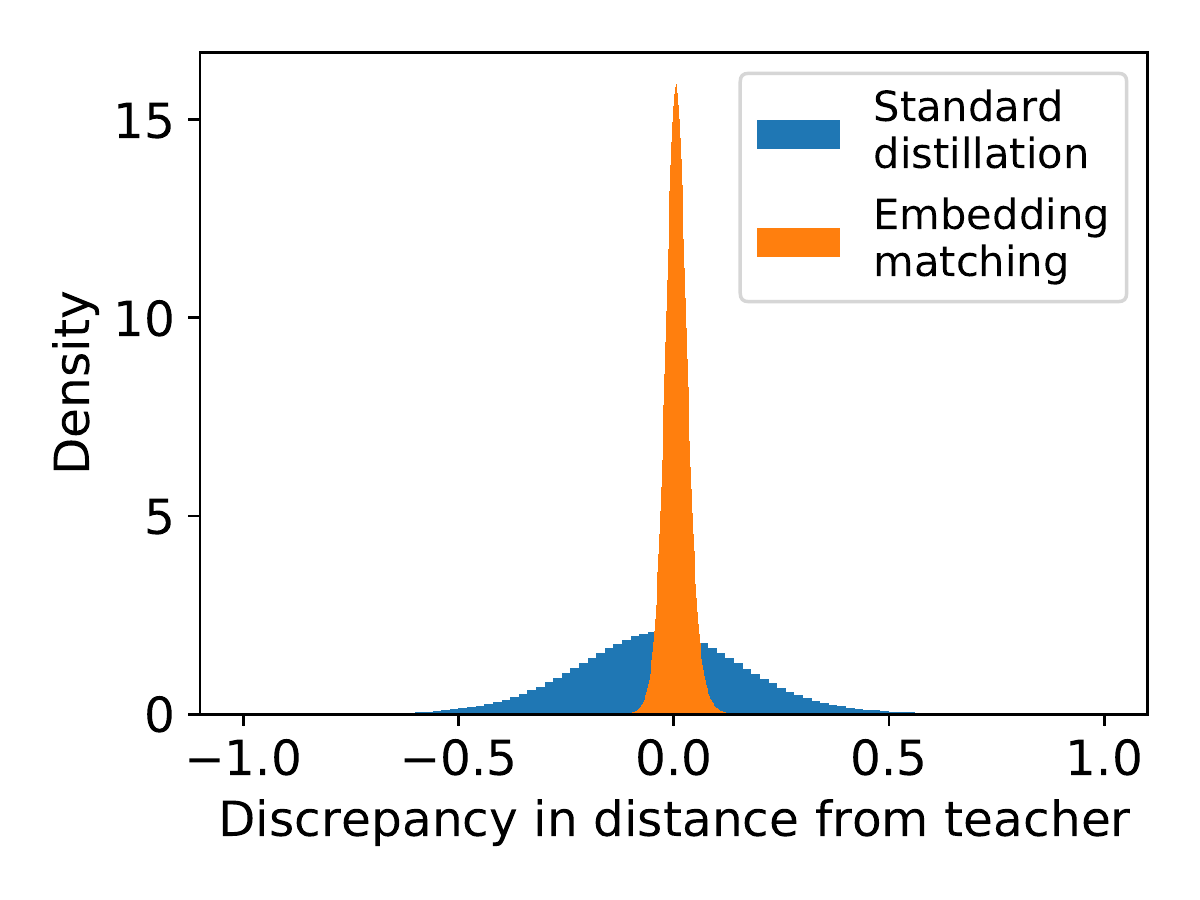}
    \vspace{-5mm}
    \caption{Histogram of teacher-student distance discrepancy in queries.}
    \label{fig:de_distance}
    \vspace{5mm}
\end{wrapfigure}
% \end{figure}

We qualitatively look at embeddings from CE model in Fig.~\ref{fig:ce-visualize}.
The embedding ${\tt emb}^{\rm t}_{q, d}$ from \texttt{[CLS]}-pooled CE model does not capture semantic similarity between query and document as it is solely trained to classify whether the query-document pair is relevant or not. In contrast, the (proxy) query embeddings ${\tt emb}^{\rm t}_{q \leftarrow (q,d)}$ from our Dual-pooled CE model with reconstruction loss do not degenerate and its embeddings groups same query whether conditioned on positive or negative document together. Furthermore, other related queries are closer than unrelated queries. Such informative embedding space would aid distillation to a DE model via embedding matching. 

\vspace{10mm}

\begin{figure}[bh]
    \centering
    \includegraphics[page=10,width=0.6\linewidth,trim=0 36mm 9cm 0,clip]{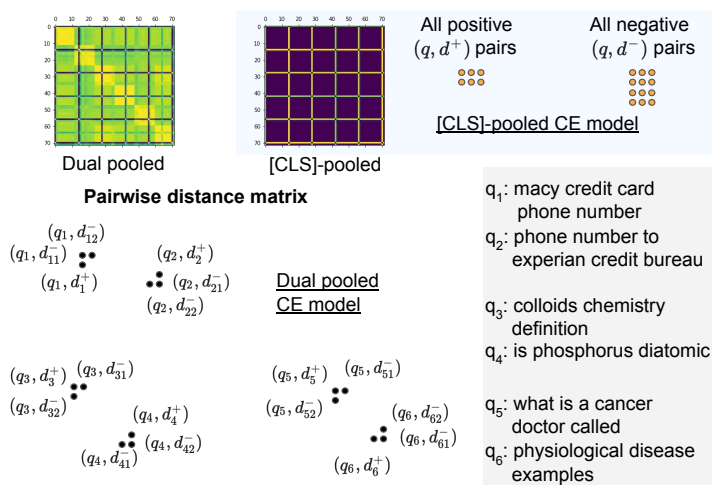}
    \vspace{-2mm}
    \caption{Illustration of geometry expressed by \texttt{[CLS]}-pooled CE and our Dual-pooled CE model on 6 queries from MSMARCO and 12 passages based on pairwise distance matrix across these 72 pairs. \texttt{[CLS]}-pooled CE embeddings degenerates as all positive and negative query-document pairs almost collapse to two points and fail to capture semantic information. In contrast, our Dual-pooled CE model leads to much richer representation that can express semantic information.}
    \label{fig:ce-visualize}
\end{figure}

\end{document}